\newtheorem{theorem}{Theorem}
\newtheorem{proposition}[theorem]{Proposition}
\theoremstyle{definition}
\newtheorem{definition}[theorem]{Definition}
\newtheorem{example}[theorem]{Example}
\DeclareMathOperator*{\argmax}{arg\,max}
\let\algorithmicleftmargin\ALG@thistlm
\tikzset{var/.style={draw,circle,fill=white,inner sep=1.5pt,minimum size=8pt}}
\tikzset{ext/.style={var,fill=black,text=white}}
\tikzset{fac/.style={draw,rectangle}}
\tikzset{subgraph/.style={draw,cloud}}
\tikzset{every picture/.style={baseline=-2.5pt}}
\tikzset{every label/.style={font={\footnotesize}}}
\tikzset{plate/.style={draw,rectangle,rounded corners}}
\newlist{compactenum}{enumerate}{1} 
\setlist[compactenum]{label=\arabic*.,ref=\arabic*,nosep,parsep=\parskip,leftmargin=2em}
\newlist{compactitem}{itemize}{3} 
\setlist[compactitem]{label=\textbullet,nosep,parsep=\parskip,leftmargin=2em}
\newcommand{\xlongrightarrow}[1]{\stackrel{#1}{\longrightarrow}}
\newcommand{\asst}{\xi}
\newcommand{\Asst}{\Xi}
\newcommand{\type}{\mathit{type}}
\newcommand{\att}{\mathit{att}}
\newcommand{\ext}{\mathit{ext}}
\newcommand{\lab}{\mathit{lab}}
\newcommand{\rv}[1]{\mathbf{#1}}
\newcommand{\nt}[1]{\mathsf{#1}} 
\newcommand{\pos}[1]{(#1)}
\newcommand{\labset}{L}
\newcommand{\deriv}{D}
\newcommand{\domain}{\Omega}
\let\conj\sqcap
\newcommand{\shrinkdisplay}[1]{\par\scalebox{0.8}{\begin{minipage}[b]{1.25\textwidth}\par#1\par\end{minipage}\par}}
\title{Factor Graph Grammars}
\author{David Chiang \\ University of Notre Dame \\ \texttt{dchiang@nd.edu} \And Darcey Riley \\ University of Notre Dame \\ \texttt{darcey.riley@nd.edu}}
\begin{document}

\maketitle

\begin{abstract}
We propose the use of hyperedge replacement graph grammars for factor graphs, or \emph{factor graph grammars} (FGGs) for short. FGGs generate sets of factor graphs and can describe a more general class of models than plate notation, dynamic graphical models, case--factor diagrams, and sum--product networks can. Moreover, inference can be done on FGGs without enumerating all the generated factor graphs. For finite variable domains (but possibly infinite sets of graphs), a generalization of variable elimination to FGGs allows exact and tractable inference in many situations. For finite sets of graphs (but possibly infinite variable domains), a FGG can be converted to a single factor graph amenable to standard inference techniques.
\end{abstract}

\section{Introduction}
\label{sec:intro}

Graphs have been used with great success as representations of probability models, both Bayesian and Markov networks \citep{koller+friedman:2009} as well as latent-variable neural networks \citep{schulman+:nips2015}. But in many applications, especially in speech and language processing, a fixed graph is not sufficient. The graph may have substructures that repeat a variable number of times: for example, a hidden Markov model (HMM) depends on the number of words in the string. Or, part of the graph may have several alternatives with different structures: for example, a probabilistic context-free grammar (PCFG) contains many trees for a given string.

Several formalisms have been proposed to fill this need. Plate notation \citep{buntine:jair1994}, plated factor graphs \citep{obermeyer+:icml2019}, and dynamic graphical models \citep{bilmes:2010} address the repeated-substructure problem, but only for sequence models like HMMs. Case--factor diagrams \citep{mcallester+:2008} and sum--product networks \citep{poon+domingos:2011} address the alternative-substructure problem, so they can describe PCFGs, but only for fixed-length inputs.

More general formalisms like probabilistic relational models \citep{getoor+:2007} and probabilistic programming languages \citep{vandemeent+:2018} address both problems successfully, but because of their generality, tractable exact inference in them is often not possible.

Here, we explore the use of \emph{hyperedge replacement graph grammars} (HRGs), a formalism for defining sets of graphs \citep{bauderon+courcelle:1987,habel+kreowski:1987,drewes+:1997}. We show that HRGs for factor graphs, or factor graph grammars (FGGs) for short, are expressive enough to solve both the repeated-substructure and alternative-substructure problems, and constrained enough allow exact and tractable inference in many situations. We make three main contributions:
\begin{compactitem}
\item We define FGGs and show how they generalize the constrained formalisms mentioned above~(\S\ref{sec:fgg}).
\item We define a \emph{conjunction} operation that enables one to modularize a FGG into two parts, one which defines the model and one which defines a query (\S\ref{sec:conjunction}).
\item We show how to perform inference on FGGs without enumerating the (possibly infinite) set of graphs they generate. For finite variable domains, we generalize variable elimination to FGGs (\S\ref{sec:inference_cfg}).
For some FGGs, this is exact and tractable; for others, it gives a sequence of successive approximations.

For infinite variable domains, we show that if a FGG generates a finite set, it can be converted to a single factor graph, to which standard graphical model inference methods can be applied (\S\ref{sec:inference_graph}). But if a FGG generates an infinite set, inference is undecidable (\S\ref{sec:inference_undecidable}).
\end{compactitem}

\section{Background}

In this section, we provide some background definitions for hypergraphs (\S\ref{sec:hypergraphs}), factor graphs (\S\ref{sec:factorgraphs}), and HRGs (\S\ref{sec:hrg}). Our definitions are mostly standard, but not entirely; readers already familiar with these concepts may skip these subsections and refer back to them as needed.

\subsection{Hypergraphs}
\label{sec:hypergraphs}

Assume throughout this paper the following ``global'' structures. Let $\labset^V$ be a finite set of \emph{node labels} and $\labset^E$ be a finite set of \emph{edge labels}, and assume there is a function $\type : \labset^E \rightarrow (\labset^V)^\ast$, which says for each edge label what the number and labels of the endpoint nodes must be. 

\begin{definition}
A \emph{hypergraph} (or simply a \emph{graph}) is a tuple $(V, E, \att, \lab^V, \lab^E)$, where 
\begin{compactitem}
\item $V$ is a finite set of \emph{nodes}.
\item $E$ is a finite set of \emph{hyperedges} (or simply \emph{edges}).
\item $\att: E \rightarrow V^\ast$ maps each edge to zero or more \emph{endpoint} nodes, not necessarily distinct.
\item $\lab^V : V \rightarrow \labset^V$ assigns labels to nodes.
\item $\lab^E : E \rightarrow \labset^E$ assigns labels to edges. 
\item For all $e$, $|\att(e)| = |\type(\lab^E(e))|$, and
if $\att(e) = v_1 \cdots v_k$ and $\type(\lab^E(e)) = \ell_1 \cdots \ell_k$, then $\lab^V(v_i) = \ell_i$ for $i = 1, \ldots, k$.
\end{compactitem}
\end{definition}

Although the elements of $V$ and $E$ can be anything, we assume in our examples that they are natural numbers. If a node $v$ has label $\ell$, we draw it as a circle with $\ell_v$ inside it. We draw a hyperedge as a square with lines to its endpoints. In principle, we would need to indicate the ordering of the endpoints somehow, but we omit this to reduce clutter.

\subsection{Factor graphs}
\label{sec:factorgraphs}


\begin{definition} \label{def:factorgraph}
A \emph{factor graph} \citep{kschischang+:2001} is a hypergraph $(V, E, \att, \lab^V, \lab^E)$ together with mappings $\domain$ and $F$, where \begin{compactitem}
\item $\domain$ maps node labels to sets of possible values. For brevity, we write $\domain(v)$ for $\domain(\lab^V(v))$.
\item $F$ maps edge labels to functions. For brevity, we write $F(e)$ for $F(\lab^E(e))$. For every edge $e$ with $\att(e) = v_1 \cdots v_k$, $F(e)$ is of type $\domain(v_1) \times \cdots \times \domain(v_k) \rightarrow \mathbb{R}_{\geq 0}$.
\end{compactitem}
A node $v$ together with its domain $\domain(v)$ is called a \emph{variable}. An edge $e$ together with its function $F(e)$ is called a \emph{factor}.
\end{definition}
We draw a factor $e$ as a small square, but instead of writing its label, we write $F(e)$ next to it, as an expression in terms of its endpoints. As shorthand, we often write Boolean expressions, which are implicitly converted to real numbers ($\text{true} = 1$ and $\text{false} = 0$).

\begin{example} \label{eg:hmm3} Although HMMs are defined for sentences of arbitrary length, factor graphs force us to choose a fixed length; below is a HMM for sentences of length 3. (Here, $\rv{T}$ and $\rv{W}$ are node labels, $\domain(\rv{T})$ is the set of possible tags, and $\domain(\rv{W})$ is the set of possible words.)
\shrinkdisplay{\begin{center}
\begin{tikzpicture}[x=1.2cm,y=0.8cm]
\node[var] (t0) at (0,0) {$\rv{T}_0$};
\node[var] (t1) at (2,0) {$\rv{T}_1$};
\node[var] (t2) at (4,0) {$\rv{T}_3$};
\node[var] (t3) at (6,0) {$\rv{T}_5$};
\node[var] (t4) at (8,0) {$\rv{T}_7$};
\node[fac,label=above:{$\rv{T}_0 = \text{BOS}$}] at (-1,0) {} edge (t0);
\node[fac,label=above:{$p(\rv{T}_1\mid \rv{T}_0)$}] at (1,0) {} edge (t0) edge (t1);
\node[fac,label=above:{$p(\rv{T}_3\mid \rv{T}_1)$}] at (3,0) {} edge (t1) edge (t2);
\node[fac,label=above:{$p(\rv{T}_5\mid \rv{T}_3)$}] at (5,0) {} edge (t2) edge (t3);
\node[fac,label=above:{$p(\rv{T}_7\mid \rv{T}_5)$}] at (7,0) {} edge (t3) edge (t4);
\node[fac,label=above:{$\rv{T}_7 = \text{EOS}$}] at (9,0) {} edge (t4);
\node[var] (w1) at (2,-2) {$\rv{W}_2$};
\node[var] (w2) at (4,-2) {$\rv{W}_4$};
\node[var] (w3) at (6,-2) {$\rv{W}_6$};
\node[fac,label=right:{$p(\rv{W}_2 \mid \rv{T}_1)$}] at (2,-1) {} edge (t1) edge (w1);
\node[fac,label=right:{$p(\rv{W}_4 \mid \rv{T}_3)$}] at (4,-1) {} edge (t2) edge (w2);
\node[fac,label=right:{$p(\rv{W}_6 \mid \rv{T}_5)$}] at (6,-1) {} edge (t3) edge (w3);
\end{tikzpicture}
\end{center}}
\end{example}

\begin{definition}
If $H$ is a factor graph, define an \emph{assignment} $\asst$ of $H$ to be a mapping from nodes to values: $\asst(v) \in \domain(v)$. We write $\Asst_H$ for the set of all assignments of $H$. 
The \emph{weight} of an assignment $\asst$ is given by
\begin{equation*}
w_H(\asst) = \prod_{\substack{\text{edges $e$}\\\mathclap{\text{with $\att(e) = v_1\cdots v_k$}}}} F(e)(\asst(v_1), \ldots, \asst(v_k)).
\end{equation*}
In a factor graph with no factors, every assignment has weight 1. A factor graph with no variables has exactly one assignment.
\end{definition}

Factor graphs are general enough to represent Bayesian networks and Markov networks.
They can also represent stochastic computation graphs (SCGs), introduced by \citet{schulman+:nips2015} for latent-variable neural networks. 

\subsection{Hyperedge Replacement Graph Grammars}
\label{sec:hrg}

Hyperedge replacement graph grammars (HRGs) were introduced by \citet{bauderon+courcelle:1987} and \citet{habel+kreowski:1987}, and surveyed by \cite{drewes+:1997}. They generate graphs by using a context-free rewriting mechanism that replaces nonterminal-labeled edges with graphs. In this section, we provide a brief definition of HRGs, with a minor extension for node labels.

\begin{definition}
A \emph{hypergraph fragment} is a tuple $(V, E, \att, \lab^V, \lab^E, \ext)$, where
\begin{compactitem}
\item $(V, E, \att, \lab^V, \lab^E)$ is a hypergraph,
\item $\ext \in V^\ast$ is a sequence of zero or more \emph{external nodes}.
\end{compactitem}
\end{definition}
In our figures, we draw external nodes as black nodes. In principle, we would need to indicate their ordering somehow, but we omit this to reduce clutter.

\begin{definition} \label{def:hrg}
A \emph{hyperedge replacement graph grammar} (HRG) is a tuple $(N, T, P, S)$, where
\begin{compactitem}
\item $N \subseteq \labset^E$ is a finite set of \emph{nonterminal symbols}. 
\item $T \subseteq \labset^E$ is a finite set of \emph{terminal symbols}, such that $N \cap T = \emptyset$.
\item $P$ is a finite set of \emph{rules} of the form $(X \rightarrow R)$, where 
\begin{compactitem}
\item $X \in N$.
\item $R$ is a hypergraph fragment with edge labels in $N \cup T$.
\item If $R$ has external nodes $x_1 \cdots x_k$, then $\type(X) = \lab^V(x_1) \cdots \lab^V(x_k)$.
\end{compactitem}
\item $S \in N$ is a distinguished \emph{start nonterminal symbol} with $\type(S) = \epsilon$.
\end{compactitem}
\end{definition}
Although a left-hand side $X$ is formally just a nonterminal symbol, we draw it as a hyperedge labeled~$X$ inside, with replicas of the external nodes as its endpoints. 
On right-hand sides, we draw an edge~$e$ with nonterminal label $X$ as a square with $X_e$ inside. If $R$ is the empty graph, we write $\emptyset$.

Intuitively, a HRG generates graphs by starting with a hyperedge labeled $S$ and repeatedly selecting an edge $e$ labeled $X$ and a rule $X \rightarrow R$ and replacing $e$ with $R$. (See Figure~\ref{fig:replacement}a for an example, where $H=R$.) Replacement stops when there are no more nonterminal-labeled edges.

As with a CFG, we can abstract away from the ordering of replacement steps using a \emph{derivation tree}, in which the nodes are labeled with HRG rules, and an edge from parent $\pi_1$ to child $\pi_2$ has a label indicating which edge in the right-hand side of $\pi_1$ is replaced with the right-hand side of $\pi_2$.

\begin{figure}
\begin{center}
\begin{tabular}{c@{\hspace*{4em}}c}
\begin{tikzpicture}
\node[var](v1) at (90:0.6) {};
\node[var](v2) at (210:0.6) {};
\node[var](v3) at (330:0.6) {};
\coordinate(u) at (80:1) edge (v1);
\coordinate(u) at (100:1) edge (v1);
\coordinate(u) at (200:1) edge (v2);
\coordinate(u) at (220:1) edge (v2);
\coordinate(u) at (320:1) edge (v3);
\coordinate(u) at (340:1) edge (v3);
\node[fac] at (0,0) {$X$} edge (v1) edge (v2) edge (v3);
\end{tikzpicture}
\quad
\begin{tikzpicture}
\node[draw,cloud] at (0,0) {$H$};
\node[ext](x1) at (90:0.5) {};
\node[ext](x2) at (210:0.5) {};
\node[ext](x3) at (330:0.5) {};
\end{tikzpicture}
\makebox[0.75in]{$\xRightarrow{\text{replace}}$}
\begin{tikzpicture}
\node[draw,cloud] at (0,0) {$H$};
\node[var](v1) at (90:0.5) {};
\node[var](v2) at (210:0.5) {};
\node[var](v3) at (330:0.5) {};
\coordinate(u) at (80:1) edge (v1);
\coordinate(u) at (100:1) edge (v1);
\coordinate(u) at (200:1) edge (v2);
\coordinate(u) at (220:1) edge (v2);
\coordinate(u) at (320:1) edge (v3);
\coordinate(u) at (340:1) edge (v3);
\end{tikzpicture}
&
\begin{tikzpicture}[baseline=-1cm]
\node(n) at (0,0) {$\pi$};
\coordinate(n1) at (-1,-1);
\node at (0,-1) {$\cdots$};
\coordinate(nk) at (1,-1);
\draw (n.south) to node[auto=right] {\small $e_1$} (n1.north);
\draw (n.south) to node[auto=left] {\small $e_k$} (nk.north);
\draw (n1.south) -- +(-0.5cm,-0.75cm) -- +(0.5cm,-0.75cm) -- cycle;
\node at ($(n1.south)+(0,-0.5cm)$) {$D_1$};
\draw (nk.south) -- +(-0.5cm,-0.75cm) -- +(0.5cm,-0.75cm) -- cycle;
\node at ($(nk.south)+(0,-0.5cm)$) {$D_k$};
\end{tikzpicture}
\\
\\
(a) & (b)
\end{tabular}
\end{center}
\caption{(a) Example of replacing a hyperedge labeled $X$ with a hypergraph fragment $H$. Here $|\type(X)|=3$, but in general, there could be any number of endpoint/external nodes, including zero. (b) A derivation tree.}
\label{fig:replacement}
\end{figure}
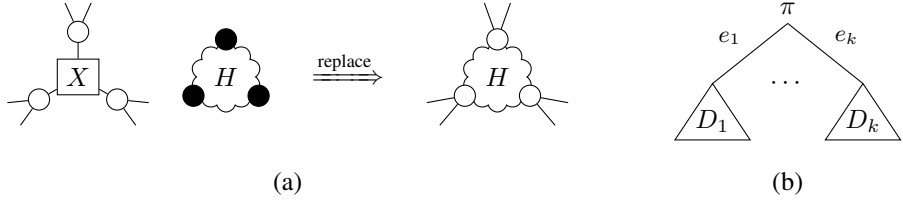

\begin{definition}
Let $G$ be a HRG. For all nonterminals $X$, define the set $\mathcal{D}(G, X)$ of \emph{$X$-type derivation trees} (or simply \emph{$X$-type derivations}) of $G$ to be the smallest set containing all 
finite, unordered, edge-labeled trees of the form shown in Figure~\ref{fig:replacement}b,
where $\pi = (X\rightarrow R)$ is a rule in $G$, $R$ has nonterminal-labeled edges $e_1, \ldots, e_k$ with labels $X_1, \ldots, X_k$, and for $i=1, \ldots, k$, $D_i$ is an $X_i$-type derivation. We simply write \emph{derivation} for $S$-type derivation, and we let $\mathcal{D}(G) = \mathcal{D}(G, S)$.

The \emph{derived graph} of a derivation $D$ is the graph formed as follows. If $D$ is as shown in Figure~\ref{fig:replacement}b, then for $i = 1, \ldots k$, let $H_i$ be the derived graph of $D_i$. In (a copy of) $R$, replace $e_i$ with $H_i$, making the $j$th endpoint of $e_i$ and the $j$th external node of $H_i$ into the same node (for $j=1, \ldots, |\att(e_i)|$). The resulting node is external iff the $j$th endpoint was. All other nodes are kept distinct. (Again, see Figure~\ref{fig:replacement}a for an example with $X=X_i$ and $H=H_i$.)
\end{definition}

From now on, when we mention a derivation $\deriv$ in a context where a graph would be expected, the derived graph of $\deriv$ is to be understood. 

\section{Factor Graph Grammars}
\label{sec:fgg}

\begin{definition}
A HRG for factor graphs, or a \emph{factor graph grammar} (FGG) for short, is a HRG together with mappings $\domain$ and $F$, as in the definition of factor graphs (Definition \ref{def:factorgraph}), except that $F$ is defined on terminal edge labels only.
\end{definition}

\begin{example} \label{eg:hmm} Figure~\ref{fig:fgg_hmm} shows a FGG which is equivalent to a HMM. It generates an infinite number of graphs, one for each string length. Also shown is the derivation tree of the factor graph of Example 3.
\end{example}

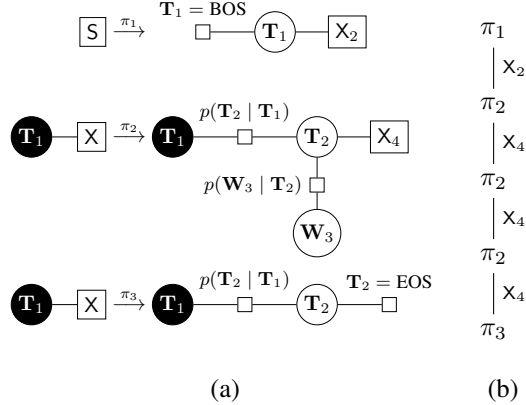
\begin{figure}
\begin{center}
\begin{tabular}{cc}
\scalebox{0.8}{%
$\begin{aligned}[t]
\begin{tikzpicture} 
\node[fac] at (0,0) { $\nt{S}$ }; 
\end{tikzpicture} 
&\xlongrightarrow{\pi_1}
\begin{tikzpicture}[x=1.2cm,y=0.8cm]
\node[var] (t) at (0,0) {$\rv{T}_1$}; 
\node[fac,label=above:{$\rv{T}_1 = \text{BOS}$}] at (-1,0) {} edge (t); 
\node[fac] at (1,0) {$\nt{X}_2$} edge (t); 
\end{tikzpicture} 
\\
\\[1ex]
\begin{tikzpicture} 
\node[ext] (x) at (-1,0) {$\rv{T}_1$}; \node[fac] at (0,0) { $\nt{X}$ } edge (x); 
\end{tikzpicture} 
&\xlongrightarrow{\pi_2}
\begin{tikzpicture}[x=1.2cm,y=0.8cm] 
\node[ext] (x) at (-2,0) {$\rv{T}_1$}; 
\node[var] (t) at (0,0) {$\rv{T}_2$}; 
\node[var] (w) at (0,-2) {$\rv{W}_3$};
\node[fac,label=above:{$p(\rv{T}_2\mid \rv{T}_1)$}] at (-1,0) {} edge (x) edge (t); 
\node[fac] at (1,0) {$\nt{X}_4$} edge (t); 
\node[fac,label=left:{$p(\rv{W}_3 \mid \rv{T}_2)$}] at (0,-1) {} edge (t) edge (w);
\end{tikzpicture} 
\\
\begin{tikzpicture} 
\node[ext] (x) at (-1,0) {$\rv{T}_1$}; \node[fac] at (0,0) { $\nt{X}$ } edge (x); 
\end{tikzpicture} 
&\xlongrightarrow{\pi_3}
\begin{tikzpicture}[x=1.2cm,y=0.8cm] 
\node[ext] (t1) at (-2,0) {$\rv{T}_1$}; 
\node[var] (t2) at (0,0) {$\rv{T}_2$}; 
\node[fac,label=above:{$p(\rv{T}_2 \mid \rv{T}_1)$}] at (-1,0) {} edge (t1) edge (t2);
\node[fac,label=above:{$\rv{T}_2 = \text{EOS}$}] at (1,0) {} edge (t2); 
\end{tikzpicture}
\end{aligned}$}
&
\begin{tikzpicture}
\tikzset{site/.style={auto=right,font={\scriptsize}}}
\node (d) {$\pi_1$};
\node[below of=d] (d1) {$\pi_2$} edge node[site] {$\nt{X}_2$} (d);
\node[below of=d1] (d11) {$\pi_2$} edge node[site] {$\nt{X}_4$} (d1);
\node[below of=d11] (d111) {$\pi_2$} edge node[site] {$\nt{X}_4$} (d11);
\node[below of=d111] (d1111) {$\pi_3$} edge node[site] {$\nt{X}_4$} (d111);
\end{tikzpicture}
\\
\\
(a) & (b)
\end{tabular}
\end{center}
\caption{(a) A FGG generating the infinite set of unrollings of a HMM, one for each sequence length. Each rule is labeled $\pi_i$ for use in the derivation tree. (b) Derivation tree of the factor graph of Example~\ref{eg:hmm3}. An edge from parent $\pi$ with label $X$ to child $\pi'$ means that the right-hand side of $\pi'$ replaces the edge labeled $X$ in the right-hand side of $\pi$.}
\label{fig:fgg_hmm}
\end{figure}

Example~\ref{eg:pcfg} in Appendix~\ref{sec:pcfg} shows how to simulate a PCFG in Chomsky normal form as a FGG.

The graphs generated by a FGG can be viewed, together with $\domain$ and $F$, as factor graphs, each of which defines a (not necessarily normalized) distribution over assignments. Moreover, the whole language of the FGG defines a (not necessarily normalized) distribution over derivations and assignments to the variables in them. If $\deriv \in \mathcal{D}(G)$, then
\[ w_G(\deriv, \xi) = w_\deriv(\xi).\]

FGGs can simulate several other formalisms for dynamically-structured models. As mentioned above (\S\ref{sec:intro}), they can solve two problems that previous formalisms have addressed separately.

FGGs can generate repeated substructures like plate notation \citep{buntine:jair1994,obermeyer+:icml2019} and dynamic graphical models \citep{bilmes:2010} can. There are some structures that plate notation can describe that a FGG cannot -- like the set of all restricted Boltzmann machines, which have two fully-connected layers of nodes. But these are the same structures that \citet{obermeyer+:icml2019} try to avoid, because inference on them is (believed) intractable. FGG rules these structures out naturally.

FGGs can generate alternative substructures like case--factor diagrams \citep{mcallester+:2008} and valid sum--product networks \citep{poon+domingos:2011} can; in particular, they can simulate PCFGs in such a way that inference is equivalent to the cubic-time inside and Viterbi algorithms.

\begin{theorem}
All of the following can be converted into an equivalent FGG:
\begin{compactenum}
\item Plated factor graphs for which the sum--product algorithm of \citet{obermeyer+:icml2019} succeeds.
\item Dynamic graphical models.
\item Case--factor diagrams.
\item Valid sum--product networks.
\end{compactenum}
\end{theorem}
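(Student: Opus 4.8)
The plan is to prove the four claims separately: each is a self-contained syntactic translation from one formalism into FGG rules, followed by a check that it is \emph{weight-preserving}, i.e.\ that there is a bijection between the objects of the source formalism (unrollings, selected subtrees, \dots) paired with variable assignments and the $(\deriv,\asst)$ pairs of the resulting FGG, under which $w_G(\deriv,\asst)$ equals the corresponding source weight. For the ``unrolling'' formalisms (items 1--2) this bijection is between plate/time-step sizes and derivations; for the ``alternative'' formalisms (items 3--4) it is between selected subtrees of the source DAG and derivations.

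I would start with dynamic graphical models (item 2), the most direct generalization of the HMM FGG of Figure~\ref{fig:fgg_hmm}. Such a model is a prologue template, a repeatable chunk template, and an epilogue template, glued along a fixed interface of variables. Introduce a nonterminal $\nt{X}$ whose type is exactly that interface, a start rule $\nt{S}\to(\text{prologue}\cup\nt{X})$, a recursive rule $\nt{X}\to(\text{chunk}\cup\nt{X}')$ that emits one chunk and keeps a fresh copy of the interface external, and a terminating rule $\nt{X}\to\text{epilogue}$. A derivation with $T$ applications of the recursive rule has derived graph exactly the $T$-step unrolling, and the weight of an assignment is the product of the instantiated template factors, matching the model. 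Plated factor graphs (item 1) are handled the same way but parameterized by the plate nesting structure: under the hypothesis that the algorithm of \citet{obermeyer+:icml2019} succeeds, the plates form a laminar family and no factor joins incomparable plate instances --- precisely the condition ruling out the fully-connected-bipartite structures mentioned in the text. One then creates a nonterminal per plate, with type equal to the variables shared across that plate's boundary, a rule emitting one slice of the plate together with a recursive copy of its nonterminal (and nonterminal edges for any nested plates), and a rule closing the plate; correctness follows by induction on nesting depth.

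Items 3 and 4 I would treat uniformly, since a case--factor diagram and a valid sum--product network are both rooted DAGs alternating ``alternative'' nodes (\textsc{or}/sum) with ``combination'' nodes (\textsc{and}/product), with leaves carrying elementary factors, and both impose a scope-disjointness condition (decomposability for SPNs, the analogous variable partition for CFDs) that makes each combination node behave like the right-hand side of one CFG rule. For each DAG node $u$ introduce a nonterminal $\nt{N}_u$ whose type lists the variables in $\mathrm{scope}(u)$: an alternative node with children $u_1,\dots,u_m$ (all of scope $\mathrm{scope}(u)$) gives $m$ rules $\nt{N}_u\to\nt{N}_{u_i}$, one per child, carrying the child's weight; a combination node with children of disjoint scopes whose union is $\mathrm{scope}(u)$ gives one rule $\nt{N}_u\to(\nt{N}_{u_1}\cup\dots\cup\nt{N}_{u_m})$ with the external nodes partitioned accordingly; a leaf over a variable $X$ gives a rule whose right-hand side is a single external node $X$ with an attached terminal factor equal to the leaf distribution; and $\nt{S}\to{}$ creates one fresh node per variable in the root's scope and attaches $\nt{N}_{\text{root}}$. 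A derivation of this FGG is exactly a selection of one child at each alternative node --- i.e.\ the ``selected subtree'' notion used to define the value of the SPN/CFD --- so $\sum_{\deriv,\asst} w_G(\deriv,\asst)$ equals the value the SPN/CFD computes, and the correspondence is weight-preserving by induction on the DAG.

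The main obstacle is item 1: ``the sum--product algorithm of \citet{obermeyer+:icml2019} succeeds'' is an operational hypothesis about a specific algorithm, so the real content is extracting from its success the clean structural invariant (laminar plates, no cross-plate factors) that the construction above relies on, together with the bookkeeping of exactly which variables must be threaded through each plate nonterminal as external nodes. The remaining three items are comparatively routine once the type of each nonterminal is fixed; there the only delicate point is checking that representing a shared DAG node by a single \emph{reused} nonterminal, rather than by duplicated subgraphs, leaves the weight unchanged --- which holds precisely because both the source semantics and $w_G$ are defined by summing over trees/selections rather than over the DAG itself.
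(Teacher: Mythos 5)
Your plan for items 2--4 is workable, but item 1 has a real gap, and it is exactly the one you flag. You propose to extract from ``the algorithm of \citet{obermeyer+:icml2019} succeeds'' a clean structural invariant (laminar plates, no factor joining incomparable plate instances) and then build the FGG from that invariant. But you never establish that this is in fact the invariant, and the actual characterization is more delicate: the relevant condition is a recursive one about whether, at each stage, eliminating a connected component of the maximal-plate-set subgraph strictly shrinks the plate set of its frontier variables. The paper sidesteps the need for any such characterization by making the FGG construction \emph{mirror the recursion of the sum--product algorithm itself} (Algorithm~\ref{alg:pfg}): it repeatedly picks the edge with maximal plate set $L$, takes a connected component $H_c$, computes the frontier $V_f$ and $L'=\cup_{v\in V_f}P(v)$, errors exactly when $L=L'$, and otherwise replaces $H_c$ by a chain of indexed nonterminals $X^n,\dots,X^0$ with endpoints $V_f$ and $n=\prod_{b\in L\setminus L'}M(b)$. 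Because the construction fails precisely when the original algorithm fails, no independent structural lemma is needed. That is the missing idea; without it (or a proved equivalence between algorithm success and your invariant), your item 1 does not go through. Relatedly, your sketch leaves the plate counts $M(b)$ unaccounted for; the paper threads them in via the conjunction $G\conj G_M$ with indexed nonterminals, and the external nodes are the component's frontier $V_f$, not ``the variables shared across the plate's boundary'' in general.

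For items 3--4 your route is genuinely different from the paper's and worth comparing. You give each DAG node $u$ a nonterminal $\nt{N}_u$ of type $\mathit{scope}(u)$ and thread all scope variables through as external nodes, so shared variables are identified in the derived graph; correctness then follows structurally, but the nonterminal arities grow with scope size and you must verify that DAG sharing never forces two incompatible threadings (which does follow from decomposability, but needs an argument). The paper instead uses nonterminals with \emph{no} external nodes, duplicating a variable node at each occurrence, and leans on the validity/scope-disjointness conditions of CFDs and SPNs to show the sum--product is nonetheless preserved; this keeps every right-hand side constant-sized and makes inference linear in the number of DAG nodes. Your construction buys a more faithful derived graph at the cost of unbounded hyperedge arity (hence unbounded $k$ in Theorem~\ref{thm:inference_cfg}); the paper's buys efficiency at the cost of an argument that duplication is harmless. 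Item 2 matches the paper's construction essentially verbatim.
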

\begin{proof}
See Appendix~\ref{sec:otherformalisms}.
\end{proof}

\section{Conjunction}
\label{sec:conjunction}

The preceding examples show how to use FGGs to model the probability of all tagged strings or all trees generated by a grammar. But it's common for queries to constrain some variables to fixed values, sum over some variables, and get the distribution of the remaining variables. How do such queries generalize to FGGs? For example, in a HMM, how do we compute the probability of all taggings of a given string? Or, how do we compute the marginal distribution of the second-to-last tag? 

To answer such questions, we need to be able to specify a set of nodes across the graphs of a graph language, like the second-to-last tag. Our only means of doing this is to specify a particular node in a particular right-hand side, which could correspond to zero, one, or many nodes in the derived graphs. And we can modify a FGG so that a particular node in a particular right-hand side is always (say) the second-to-last tag. But we propose to factor such modifications into a separate FGG, keeping the FGG describing the model unchanged. Then the modifications can be applied using a \emph{conjunction} operation, which we describe in this section.

Conjunction is closely related to synchronous HRGs \citep{jones+:coling2012}, and, because HRG derivation trees are generated by regular tree grammars, to intersection/composition of finite tree automata/transducers \citep{comon+:2007}. It is also similar to the \textsf{\small PRODUCT} operation on weighted logic programs \citep{cohen+:2010}.

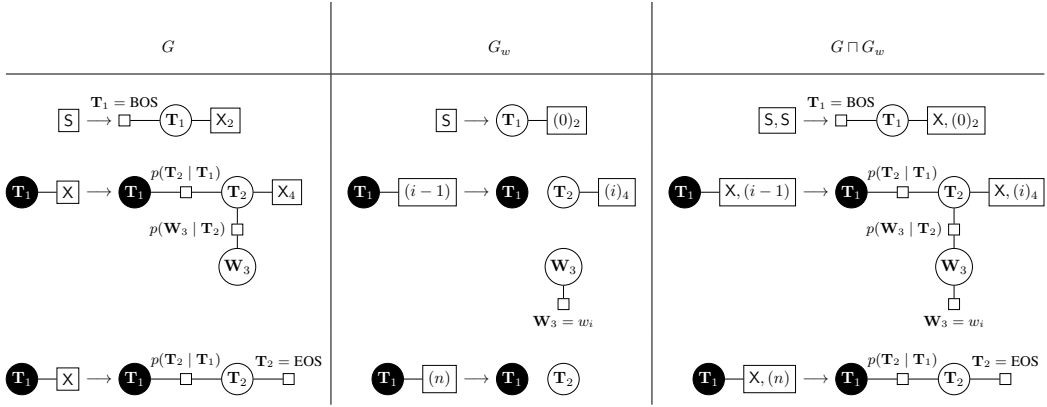
\begin{figure}
\resizebox{\textwidth}{!}{%
\tikzset{x=1.1cm,y=0.8cm,node distance=0.4cm}
\renewcommand{\arraycolsep}{2pt}
\renewcommand{\arraystretch}{4}
$\begin{array}{@{}rcl|@{\quad}rcl@{\quad}|@{\quad }rcl@{}}
\multicolumn{3}{c|@{\quad}}{G} &
\multicolumn{3}{c|@{\quad}}{G_w} &
\multicolumn{3}{c@{}}{G \conj G_w} \\
\hline
\begin{tikzpicture} 
\node[fac] at (0,0) { $\nt{S}$ }; 
\end{tikzpicture} 
&\longrightarrow&
\begin{tikzpicture}
\node[var] (t) at (0,0) {$\rv{T}_1$}; 
\node[fac,label=above:{$\mathclap{\rv{T}_1 = \text{BOS}}$}] at (-1,0) {} edge (t); 
\node[fac,right=of t] {$\nt{X}_2$} edge (t); 
\end{tikzpicture} &
\begin{tikzpicture} 
\node[fac] at (0,0) { $\nt{S}$ }; 
\end{tikzpicture} 
&\longrightarrow&
\begin{tikzpicture} 
\node[var] (t) at (0,0) {$\rv{T}_1$}; 
\node[fac,right=of t] { $\pos{0}_2$ } edge (t); 
\end{tikzpicture}
&
\begin{tikzpicture} 
\node[fac] at (0,0) { $\nt{S,S}$ }; 
\end{tikzpicture} 
&\longrightarrow&
\begin{tikzpicture}
\node[var] (t) at (0,0) {$\rv{T}_1$}; 
\node[fac,label=above:{$\mathclap{\rv{T}_1 = \text{BOS}}$}] at (-1,0) {} edge (t); 
\node[fac,right=of t] {$\nt{X},\pos{0}_2$} edge (t); 
\end{tikzpicture} 
\\
\begin{tikzpicture} 
\node[ext] (x) at (-0.5,0) {$\rv{T}_1$}; 
\node[fac,right=of x] { $\nt{X}$ } edge (x); 
\end{tikzpicture} 
&\longrightarrow&
\begin{tikzpicture}
\node[ext] (x) at (-2,0) {$\rv{T}_1$}; 
\node[var] (t) at (0,0) {$\rv{T}_2$}; 
\node[var] (w) at (0,-2) {$\rv{W}_3$};
\node[fac,label=above:{$p(\rv{T}_2\mid \rv{T}_1)$}] at (-1,0) {} edge (x) edge (t); 
\node[fac,right=of t] {$\nt{X}_4$} edge (t); 
\node[fac,label=left:{$p(\rv{W}_3 \mid \rv{T}_2)$}] at (0,-1) {} edge (t) edge (w);
\end{tikzpicture} &
\begin{tikzpicture} 
\node[ext] (x) at (-0.5,0) {$\rv{T}_1$};
\node[fac,right=of x] { $\pos{i-1}$ } edge (x); 
\end{tikzpicture} 
&\longrightarrow&
\begin{tikzpicture}
\node[ext] (x) at (-1,0) {$\rv{T}_1$}; 
\node[var] (t) at (0,0) {$\rv{T}_2$}; 
\node[var] (w) at (0,-2) {$\rv{W}_3$};
\node[fac,label=below:{\makebox[0pt]{$\rv{W}_3=w_i$}}] at (0,-3) {} edge (w);
\node[fac,right=of t] {$\pos{i}_4$} edge (t);
\end{tikzpicture}
&
\begin{tikzpicture}
\node[ext] (x) at (-0.5,0) {$\rv{T}_1$}; 
\node[fac,right=of x] { $\nt{X},\pos{i-1}$ } edge (x); 
\end{tikzpicture} 
&\longrightarrow&
\begin{tikzpicture}
\node[ext] (x) at (-2,0) {$\rv{T}_1$}; 
\node[var] (t) at (0,0) {$\rv{T}_2$}; 
\node[var] (w) at (0,-2) {$\rv{W}_3$};
\node[fac,label=below:{$\rv{W}_3=w_i$}] at (0,-3) {} edge (w);
\node[fac,label=above:{$p(\rv{T}_2\mid \rv{T}_1)$}] at (-1,0) {} edge (x) edge (t); 
\node[fac,right=of t] {$\nt{X},\pos{i}_4$} edge (t); 
\node[fac,label=left:{$p(\rv{W}_3 \mid \rv{T}_2)$}] at (0,-1) {} edge (t) edge (w);
\end{tikzpicture} \\
\begin{tikzpicture}
\node[ext] (x) at (-0.5,0) {$\rv{T}_1$}; 
\node[fac,right=of x] { $\nt{X}$ } edge (x); 
\end{tikzpicture} 
&\longrightarrow&
\begin{tikzpicture}
\node[ext] (t1) at (-2,0) {$\rv{T}_1$}; 
\node[var] (t2) at (0,0) {$\rv{T}_2$}; 
\node[fac,label=above:{$p(\rv{T}_2 \mid \rv{T}_1)$}] at (-1,0) {} edge (t1) edge (t2);
\node[fac,label=above:{$\rv{T}_2 = \text{EOS}$}] at (1,0) {} edge (t2); 
\end{tikzpicture}
&
\begin{tikzpicture} 
\node[ext] (x) at (-0.5,0) {$\rv{T}_1$};
\node[fac,right=of x] { $\pos{n}$ } edge (x); 
\end{tikzpicture} 
&\longrightarrow&
\begin{tikzpicture}
\node[ext] (t1) at (-1,0) {$\rv{T}_1$}; 
\node[var] (t2) at (0,0) {$\rv{T}_2$}; 
\end{tikzpicture}
&
\begin{tikzpicture}
\node[ext] (x) at (-0.5,0) {$\rv{T}_1$}; 
\node[fac,right=of x] { $\nt{X},\pos{n}$ } edge (x); 
\end{tikzpicture} 
&\longrightarrow&
\begin{tikzpicture}
\node[ext] (t1) at (-2,0) {$\rv{T}_1$}; 
\node[var] (t2) at (0,0) {$\rv{T}_2$}; 
\node[fac,label=above:{$p(\rv{T}_2 \mid \rv{T}_1)$}] at (-1,0) {} edge (t1) edge (t2);
\node[fac,label=above:{$\rv{T}_2 = \text{EOS}$}] at (1,0) {} edge (t2); 
\end{tikzpicture}
\end{array}$%
}
\caption{Illustration of the conjunction operation (Example~\ref{eg:conjoinfgg}). In rules with $i$ in their nonterminals, $i$ ranges from 1 to $n$ where $n = |w|$.}
\label{fig:conjoinfgg}
\end{figure}

\begin{definition}
Two FGG rules are \emph{conjoinable} if they can be written in the form
\begin{align*}
X_1 &\rightarrow R_1 & R_1 &= (V, E_N \cup E_1, \att_N \cup \att_1, \lab^V, \lab^E_1, \ext) \\
X_2 &\rightarrow R_2 & R_2 &= (V, E_N \cup E_2, \att_N \cup \att_2, \lab^V, \lab^E_2, \ext),
\end{align*}
where
\begin{compactitem}
\item $E_N$ contains only nonterminal edges, and $\att_N$ is defined on $E_N$.
\item $E_1$, $E_2$ contain only terminal edges, and $\att_1$, $\att_2$ are defined on $E_1$, $E_2$, respectively.
\item $\type(X_1) = \type(X_2)$, and for $e \in E_N$, $\type(\lab^E_1(e)) = \type(\lab^E_2(e))$.
\end{compactitem}
Then their \emph{conjunction} is
\begin{align*}\langle X_1, X_2\rangle &\rightarrow R & R = (V, E_N \cup E_1 \uplus E_2, \att, \lab^V, \lab^E, \ext)\end{align*} where $\uplus$ means that all edges in $E_1$ and $E_2$ are kept distinct while taking their union, and
\begin{align*}
\lab^E(e) &= \begin{cases}
\langle\lab^E_1(e), \lab^E_2(e)\rangle & \text{if $e \in E_N$} \\
\lab^E_1(e) & \text{if $e \in E_1$} \\
\lab^E_2(e) & \text{if $e \in E_2$}
\end{cases} &
\att(e) &= \begin{cases}
\att_N(e) & \text{if $e \in E_N$} \\
\att_1(e) & \text{if $e \in E_1$} \\
\att_2(e) & \text{if $e \in E_2$}.
\end{cases}
\end{align*}
The \emph{conjunction} of two FGGs $G_1$ and $G_2$, written as $G_1 \sqcap G_2$, is the FGG containing the conjunction of all conjoinable pairs of rules from $G_1$ and $G_2$.
\end{definition}

\begin{example} \label{eg:conjoinfgg}
Our FGG for HMMs (Example~\ref{eg:hmm}) is repeated in Figure~\ref{fig:conjoinfgg} as $G$.
We can constrain the $\rv{W}$ variables to an observed string $w$ using another FGG, $G_w$, which has the same variables as $G$ but different factors; its nonterminal edges are the same as $G$ but with different labels. This FGG generates just one graph, whose $\rv{W}$ nodes spell out the string $w$. The conjunction of these two FGGs is shown in the last column ($G \conj G_w$). It combines the factors and nonterminal labels of $G$ and $G_w$ and generates just one graph, the HMM for string $w$.
\end{example}

\begin{example}
To compute the distribution of the second-to-last tag, we need a way of identifying the variable for the second-to-last tag across all graphs. We can do this by conjoining with the FGG:
\shrinkdisplay{\tikzset{node distance=0.18cm}
\begin{align*}
\begin{tikzpicture}
\node[fac] {$\nt{S}$};
\end{tikzpicture}
&\xlongrightarrow{\pi_1}
\begin{tikzpicture}
\node[var] (t1) {$\rv{T}_1$};
\node[fac,right=of t1] {$\nt{X}_2$} edge (t1);
\end{tikzpicture} 
&\quad
\begin{tikzpicture}
\node[ext] (t1) {$\rv{T}_1$};
\node[fac,right=of t1] {$\nt{X}$} edge (t1);
\end{tikzpicture}
&\xlongrightarrow{\pi_2}
\begin{tikzpicture}
\node[ext] (t1) {$\rv{T}_1$};
\node[var,right=of t1] (t2) {$\rv{T}_2$};
\node[fac,right=of t2] {$\nt{X}_4$} edge (t2);
\node[var,below=of t2] {$\rv{W}_3$};
\end{tikzpicture}
&\quad\begin{tikzpicture}
\node[ext] (t1) {$\rv{T}_1$};
\node[fac,right=of t1] {$\nt{X}$} edge (t1);
\end{tikzpicture}
&\xlongrightarrow{\pi_3}
\begin{tikzpicture}
\node[ext] (t1) {$\rv{T}_1$};
\node[var,right=of t1] (t2) {$\rv{T}_2$};
\node[fac,right=of t2] {$\nt{Y}_4$} edge (t2);
\node[var,below=of t2] {$\rv{W}_3$};
\end{tikzpicture}
&\quad
\begin{tikzpicture}
\node[ext] (t1) {$\rv{T}_1$};
\node[fac,right=of t1] {$\nt{Y}$} edge (t1);
\end{tikzpicture}
&\xlongrightarrow{\pi_4}
\begin{tikzpicture}
\node[ext] (t1) {$\rv{T}_1$};
\node[var,right=of t1] (t2) {$\rv{T}_2$};
\end{tikzpicture}
\end{align*}}
Then the second-to-last tag is always node $\rv{T}_1$ in the right-hand side of rule $\pi_3$. The methods of the following section can then be used to compute the distribution of this node.
\end{example}

Example~\ref{eg:pcfg_conjoin} in Appendix~\ref{sec:pcfg} shows how to use conjunction to constrain a PCFG to a single input string.
\section{Inference}
\label{sec:inference}

Given a FGG $G$, we want to be able to efficiently compute its sum--product,
\[ Z_G = \sum_{\deriv \in \mathcal{D}(G)} \sum_{\asst \in \Asst_\deriv} w_G(\deriv, \asst). \]
We can answer a wide variety of queries by using the conjunction operation to constrain variables based on observations, and then computing the sum--product in various semirings: ordinary addition and multiplication would sum over assignments to the remaining variables, and the expectation semiring \citep{eisner:2002} would compute expectations with respect to them. The Viterbi (max--product) semiring would find the highest-weight \emph{derivation} and assignment, not necessarily the highest-weight \emph{graph} and assignment, which is NP-hard \citep{lyngso+pederson:2002}.

We consider three cases below: 
finite variable domains, but possibly infinite graph languages (\S\ref{sec:inference_cfg}); finite graph languages, but possibly infinite variable domains (\S\ref{sec:inference_graph}); 
and infinite variable domains and graph languages (\S\ref{sec:inference_undecidable}).
To help characterize these cases and their subcases, we introduce the following definitions.
\begin{definition}
A FGG is \emph{recursive} if it has an $X$-type derivation that contains an $X$-type derivation as a proper subtree; otherwise, it is \emph{nonrecursive}. A nonrecursive FGG generates a finite set of graphs; this is a common case, because the conjunction of any FGG with a nonrecursive FGG (e.g., one describing a finite-sized observation) is nonrecursive. 

A recursive FGG is \emph{nonlinearly recursive} if it has an $X$-type derivation that contains two disjoint $X$-type derivations as proper subtrees; otherwise, it is \emph{linearly recursive}.

A FGG is \emph{nonreentrant} if no derivation contains two different $X$-type derivations as subtrees. Every nonreentrant FGG is nonrecursive, and any nonrecursive FGG can be made nonreentrant by duplicating rules and renaming nonterminals (though this may cause an exponential blowup in the size of the grammar).
\end{definition}

\subsection{Finite variable domains}
\label{sec:inference_cfg}

When a HRG generates a graph, the derivation tree is isomorphic to a tree decomposition of the graph: each derivation tree node $\pi = (X \rightarrow R)$ corresponds to a bag of the tree decomposition containing the nodes in $R$. It follows that a HRG whose right-hand sides have at most $(k+1)$ nodes generates graphs with treewidth at most $k$ \cite[Theorem 37]{bodlaender:1998}. So if a FGG $G$ generates a graph $H$, computing the sum--product of $H$ by variable elimination (VE) takes time linear in the size of $H$ and exponential in $k$.

In this section, we generalize VE to compute the sum-product of all graphs generated by $G$ without enumerating them. If $G$ is nonrecursive, this is (like VE) linear in the size of $G$ and exponential in $k$; in the envisioned typical use-case, we have a fixed FGG $G$ representing the model and different FGGs $G'$ representing different observations; since conjunction cannot increase $k$, we may regard $k$ as fixed, so computing the sum--product of $G \conj G'$ takes time linear in the size of $G \conj G'$.

\begin{theorem} \label{thm:inference_cfg}
Let $G = (N, T, P, S)$ be a FGG such that for all $v$ in $G$, $|\domain(v)| \leq m$. Let $|G|$ be the number of rules in $G$, and let $k$ be such that every right-hand side in $G$ has at most $(k+1)$ nodes. Then $Z_G$ is the least solution of a monotone system of polynomial equations, and in particular:
\begin{compactenum}
\item If $G$ is nonrecursive, $Z_G$ can be computed in $O(|G|m^{k+1})$ time. 
\item If $G$ is linearly recursive, $Z_G$ can be computed in $O(|G|^3 m^{3(k+1)})$ time in the worst case.
\end{compactenum}
\end{theorem}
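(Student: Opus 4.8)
The plan is to set up a system of polynomial equations whose least fixed point is $Z_G$, mirroring the way the sum–product of a context-free language is computed, and then analyze the cost of solving it in the two cases. For each nonterminal $X$, introduce an indeterminate $\tau_X$ intended to denote the ``inside weight'' $\sum_{D \in \mathcal D(G,X)} \sum_{\asst} (\text{contribution of } D)$, where the contribution is a function of the assignment to the external nodes of $X$ — so $\tau_X$ is really an array indexed by $\domain(x_1) \times \cdots \times \domain(x_{|\type(X)|})$, of size at most $m^{|\type(X)|} \le m^{k+1}$. The equation for $\tau_X$ is a sum over rules $\pi = (X \to R)$ in $P$; for each such rule, the right-hand side is obtained by running variable elimination on the factor graph fragment $R$, treating each nonterminal edge $e_i$ labeled $X_i$ as a ``factor'' whose value is the current $\tau_{X_i}$, and then summing out all internal (non-external) nodes of $R$. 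Concretely, for rule $\pi$ the contribution is $\sum_{\asst : \asst|_{\ext} = \text{fixed}} \prod_{\text{terminal } e \in R} F(e)(\asst) \cdot \prod_{\text{nonterminal } e_i \in R} \tau_{X_i}(\asst(\att(e_i)))$. Because $\type(S) = \epsilon$, the indeterminate $\tau_S$ is a scalar, and $Z_G = \tau_S$ at the least fixed point. Monotonicity is immediate: each right-hand side is a polynomial with nonnegative coefficients in the $\tau_X$'s (all $F$-values and the Boolean-as-real factors are in $\mathbb R_{\ge 0}$), hence monotone and continuous on $[0,\infty]^{\,(\dots)}$, so by Kleene/Tarski the system has a least solution, which one checks equals the intended infinite sum by the usual argument that the $n$-th Kleene iterate equals the sum restricted to derivations of height $\le n$, and $\mathcal D(G,X)$ is the increasing union of these.

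For part (1), if $G$ is nonrecursive then the dependency relation among nonterminals (with $X \succ X_i$ whenever some rule $X \to R$ has a nonterminal edge labeled $X_i$) is acyclic, so the nonterminals can be topologically ordered and the $\tau_X$'s computed by a single back-substitution pass, bottom-up, with no iteration. The work at nonterminal $X$ is the cost of evaluating the right-hand sides of all rules $X \to R$: each is a variable elimination over a fragment with at most $k+1$ nodes, and since the derivation tree is a tree decomposition of width $k$ (by the cited Bodlaender result and the remarks preceding the theorem), VE on each fragment costs $O(m^{k+1})$ up to factors polynomial in $k$; summing over the $|G|$ rules gives $O(|G|\,m^{k+1})$ as claimed (with the treewidth-$k$ constant absorbed, as is standard).

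For part (2), if $G$ is linearly recursive then in every rule at most one nonterminal edge can lead (recursively) back to a nonterminal that depends on $X$; collapsing strongly connected components of the dependency graph, within each SCC the system of equations for the $\tau_X$'s is \emph{linear} in those $\tau_X$'s (the non-recursive nonterminal edges contribute constants, already computed from lower SCCs). So solving each SCC amounts to solving a linear system. The number of scalar unknowns in one SCC is at most $|G|\,m^{k+1}$ (at most $m^{k+1}$ entries per nonterminal, at most $|G|$ nonterminals), and assembling the coefficient matrix costs, per rule, a VE pass of size $O(m^{k+1})$ times the cost of keeping the single recursive endpoint symbolic, i.e. $O(m^{2(k+1)})$ per rule; Gaussian elimination on an $N \times N$ system is $O(N^3)$ with $N = |G|\,m^{k+1}$, giving $O(|G|^3 m^{3(k+1)})$, which dominates. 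Over all SCCs this telescopes to the same bound since the SCC sizes sum to at most the total.

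The main obstacle I expect is the bookkeeping in part (2): making precise that ``linearly recursive'' forces the within-SCC equations to be genuinely linear — one must argue that a rule $X \to R$ contributing to an SCC cannot have two nonterminal edges both pointing into that SCC, which is exactly the contrapositive of the nonlinear-recursion condition (two disjoint $X$-type subderivations), and then track how the ``at most one symbolic argument'' propagates through the variable-elimination schedule on $R$ so that the resulting expression is affine in the single recursive $\tau$. A secondary subtlety is confirming that the least fixed point of this monotone polynomial system is finite (not $+\infty$) exactly when one wants it to be — but the theorem only claims $Z_G$ \emph{is} the least solution, so divergence is permitted, and for the complexity claims in (1) and (2) the relevant systems are acyclic or linear and hence have a unique solution computed exactly in the stated time.
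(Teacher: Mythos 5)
Your proposal is correct and follows essentially the same route as the paper's proof: a monotone polynomial system with one (array-valued) unknown per nonterminal indexed by assignments to its external nodes, solved by Kleene iteration in general, by back-substitution in topological order for the nonrecursive case, and by linear-system solving within each strongly connected component for the linearly recursive case, yielding the same $O(|G|m^{k+1})$ and $O(|G|^3 m^{3(k+1)})$ bounds. Your added justification that linear recursion forces at most one nonterminal edge per rule to point back into its own SCC (via the contrapositive of the two-disjoint-subderivations condition) is a detail the paper leaves implicit, but the argument is the same.
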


\begin{proof} 
The computation of the sum--product is closely analogous to the sum--product of a PCFG \citep{stolcke:1995,nederhof+satta:2008}. We introduce some shorthand for assignments. If $\asst$ is an assignment and $v_1 \cdots v_l$ is a sequence of nodes, we write $\asst(v_1 \cdots v_l)$ for $\asst(v_1) \cdots \asst(v_l)$. If $X$ is a nonterminal and $\type(X) = \ell_1 \ldots \ell_k$, we define $\Asst_X = \domain(\ell_1) \times \cdots \times \domain(\ell_k)$, the set of assignments to the endpoints of an edge labeled $X$.

Next, we define a system of equations whose solution gives the desired sum--product. The unknowns are $\psi_X(\asst)$ for all $X \in N$ and $\asst \in \Asst_X$, and $\tau_R(\asst)$ for all rules $(X\rightarrow R)$ and $\asst \in \Asst_X$.
For all $X \in N$, let $P^X$ be the rules in $P$ with left-hand side $X$. For each $\asst \in \Asst_X$, add the equation
\begin{equation*}
  \psi_X(\asst) = \sum_{(X \rightarrow R) \in P^X} \tau_R(\asst).
\end{equation*}
For each right-hand side $R = (V, E_N \cup E_T, att, lab^V, lab^E, ext)$, where $E_N$ contains only nonterminal edges and $E_T$ contains only terminal edges, and for each $\asst \in \Asst_X$, add the equation
\begin{equation*}
\tau_R(\asst) = \sum_{\substack{\asst' \in \Asst_R\\\asst'(\ext) = \asst}} \prod_{e \in E_T} F(e)(\asst'(\att(e))) \prod_{e \in E_N} \psi_{\lab^E(e)}(\asst'(\att(e))).
\end{equation*}
Then $\sum_{\asst \in \Asst_X} \psi_X(\asst)$ represents the sum--product of all $X$-type derivations. In particular, the sum--product of the FGG is $\psi_S()$.

To solve these equations, construct a directed graph over nonterminals with an edge from $X$ to $Y$ iff there is a rule $X \rightarrow R$ where $R$ contains an edge labeled $Y$. For each connected component~$C$ of this graph in reverse topological order:
\begin{compactenum}
\item If $C = \{X\}$, compute $\psi_X$ and substitute it into the other equations. \label{item:singleton}
\item Else if the equations for $\psi_X$ and $\tau_R$ where $X\in C$ and $(X\rightarrow R) \in P$ are linear, solve them and substitute into the other equations \citep{stolcke:1995,goodman:1999}. \label{item:linear}
\item Else, the equations can be approximated iteratively \citep{goodman:1999,nederhof+satta:2008}. \label{item:nonlinear}
\end{compactenum}

If $G$ is nonrecursive, the graph of nonterminals is acyclic, so case (\ref{item:singleton}) always applies. The total running time is $O(|G| m^{k+1})$.

If $G$ is linearly recursive, then case (\ref{item:linear}) may also apply. In the worst case, the nonterminal graph is one connected component, corresponding to $O(|G| m^{k+1})$ unknowns. Solving the equations could involve inverting a matrix of this size, which takes $O(|G|^3 m^{3(k+1)})$ time.

If $G$ is nonlinearly recursive, any of the three cases may apply. For case (\ref{item:nonlinear}), each iteration takes $O(|G| m^{k+1})$ time (fixed-point iteration method) or $O(|G|^3 m^{3(k+1)})$ time (Newton's method), but the number of iterations depends on $G$.
\end{proof}

Finally, we note that we can reduce the sizes of the right-hand sides of a FGG by a process analogous to binarization of CFGs \citep{gildea:2011,chiang+:acl2013}:
\begin{proposition} \label{lem:factorize}
For any hypergraph fragment $R$, let $\bar{R}$ be the hypergraph formed by adding a hyperedge connecting $R$'s external nodes. Let $G$ be a HRG, $n_G$ be the total number of nodes in its right-hand sides, and $k$ be such that for every right-hand side $R$, the treewidth of $\bar{R}$ is at most $k$. Then there is an equivalent HRG with at most $n_G$ rules whose right-hand sides have at most $(k+1)$ nodes.
\end{proposition}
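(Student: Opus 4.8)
The plan is to mimic CFG binarization \citep{gildea:2011,chiang+:acl2013}: whenever a right-hand side $R$ has more than $k+1$ nodes, replace its rule by a cascade of small rules read off from a tree decomposition of $\bar R$, one new rule per bag, with a fresh nonterminal marking the interface between each bag and its parent. Fix a rule $(X \to R)$ with $R = (V, E, \att, \lab^V, \lab^E, \ext)$ and $|V| > k+1$ (rules already small enough are left unchanged). By hypothesis $\bar R$ has treewidth at most $k$, hence a tree decomposition $(\mathcal T, \{B_t\})$ with $|B_t| \le k+1$ for every bag $t$; we may take this decomposition to be minimal, so that it has at most $|V|$ bags. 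Because $\bar R$ has a hyperedge with $\att = \ext$, some bag $B_\rho$ contains all external nodes; root $\mathcal T$ at $\rho$. For each non-root bag $t$ with parent $t'$, fix an ordering $I_t$ of $B_t \cap B_{t'}$, and set $I_\rho = \ext$. Finally, assign to every (terminal or nonterminal) edge $e$ of $R$ a bag $\beta(e)$ with all nodes of $\att(e)$ in $B_{\beta(e)}$, which exists by the tree-decomposition property; the auxiliary hyperedge used to form $\bar R$ is then discarded.

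Now introduce, for each non-root bag $t$, a fresh nonterminal $Y_t$ whose type is the label sequence of $I_t$, and form the following rules. For the root, $X \to R_\rho$, where $R_\rho$ has node set $B_\rho$, external nodes $\ext$ (so $R_\rho$ has type $\type(X)$, as the original rule did), the edges $e$ with $\beta(e) = \rho$, and, for each child $c$ of $\rho$, one nonterminal edge labeled $Y_c$ with endpoint sequence $I_c$. For each non-root bag $t$, the rule $Y_t \to R_t$ is built the same way from $B_t$, $I_t$, the edges assigned to $t$, and the children of $t$. Every right-hand side so constructed has node set $B_t$, hence at most $k+1$ nodes, since $I_c \subseteq B_t$ for every child $c$ of $t$ and $\att(e) \subseteq B_t$ for every edge $e$ assigned to $t$. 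Doing this for all rules of $G$ (keeping the small ones unchanged) yields a HRG $G'$ whose number of rules is at most $\sum_{(X\to R)} |V(R)| = n_G$.

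It remains to check that $G'$ is equivalent to $G$, i.e.\ generates the same graphs. The crucial point is that, starting from an edge labeled $X$, the new rules coming from one original rule can reassemble only $R$: expanding $X \to R_\rho$ and then repeatedly the rules $Y_t \to R_t$ glues each subgraph $(B_t, \{e : \beta(e)=t\})$ to its parent along $I_t$. By the running-intersection property, the bags containing any fixed node $v \in V$ form a connected subtree of $\mathcal T$, so all copies of $v$ produced in different bags are merged into a single node, while distinct nodes of $R$ are never merged; each edge of $R$ appears exactly once, at bag $\beta(e)$, with its original attachment; and no spurious edges appear, since the auxiliary hyperedge was dropped. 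The original nonterminal edges of $R$ keep their labels in $G'$ and are exactly where the new rules expanding them attach, just as in $G$. Hence derivations of $G'$ and of $G$ are in correspondence with matching derived graphs, so the two HRGs are equivalent (and since terminal edges and their attachments are preserved verbatim, the construction carries over to FGGs with $F$ unchanged). I expect the gluing argument to be the delicate part — verifying that the chosen interfaces $I_t$ transmit every shared node correctly down each cascade — together with the routine checks that bag sizes, the orderings of external nodes, and the types of the new nonterminals are all consistent.
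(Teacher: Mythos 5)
Your proposal is correct and follows essentially the same approach as the paper's proof: form a tree decomposition of $\bar{R}$ rooted at a bag containing the external nodes, emit one rule per bag with fresh nonterminals whose endpoints are the bag--parent intersections, and bound the rule count by the number of bags ($\leq n_R$ per rule, summing to $n_G$). Your explicit gluing argument via running intersection is in fact more detailed than the paper's, which simply asserts that the new grammar generates the same language.
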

\begin{proof}
See Appendix~\ref{app:factorize}.
\end{proof}

\subsection{Finite graph languages}
\label{sec:inference_graph}

Next, we show that a nonrecursive FGG can also be converted into an equivalent factor graph, such that the sum--product of the factor graph is equal to the sum--product of the FGG. This makes it possible to use standard graphical model inference techniques for reasoning about the FGG, even with infinite variable domains. However, the conversion increases treewidth in general, so when the method of Section~\ref{sec:inference_cfg} is applicable, it should be preferred.

The construction is similar to constructions by \citet{smith+eisner:2008} and \citet{pynadath+wellman:1998} for dependency parsers and PCFGs, respectively. Their constructions and ours encode a set of possible derivations as a graphical model, using hard constraints to ensure that every assignment to the variables corresponds to a valid derivation.

\begin{theorem} \label{thm:inference_graph}
Let $G = (N, T, P, S)$ be a nonreentrant FGG. 
Let $n_G$ and $m_G$ be the total number of nodes and edges in the right-hand sides of $G$ respectively. Then $G$ can be converted into a factor graph with $O(n_G)$ variables and $O(n_G + m_G)$ factors which gives the same sum--product.
\end{theorem}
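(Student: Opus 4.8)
The plan is to build the factor graph by ``laying out'' the unique derivation structure explicitly as variables, one group of variables per rule of $G$, together with variables encoding, for each right-hand side node, the value it receives under an assignment. Because $G$ is nonreentrant, each derivation uses each nonterminal at most once, so the set of rules that can fire is bounded, and we can afford one ``switch'' variable per rule saying whether that rule is used in the derivation, plus, for each node $v$ appearing in the right-hand side of a rule $\pi$, a variable $\rv{x}_{\pi,v}$ ranging over $\domain(v) \cup \{\bot\}$, where $\bot$ is a fresh sentinel meaning ``this rule is not used.'' First I would make precise the one-to-one correspondence between $\mathcal{D}(G)$ and the set of ``consistent'' rule-selection patterns: since $G$ is nonreentrant, a derivation is determined by choosing, for the start symbol and then recursively for each nonterminal edge in each chosen right-hand side, which rule expands it; and nonreentrancy guarantees no nonterminal is expanded twice, so this is just a subset of rules closed under ``if $\pi$ is chosen and its RHS has a nonterminal edge $e$ labeled $X$, then exactly one $X$-rule is chosen'' (with the start rule always available).

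Next I would specify the factors. There are three kinds. (i) \emph{Structural constraints}: for each rule $\pi = (X \to R)$ and each nonterminal edge $e$ in $R$ with label $X'$, a hard-constraint factor enforcing that $\pi$ is used iff exactly one $X'$-rule is used, and that the shared-node variables match up — i.e., if $e$ has $j$th endpoint $v$ in $R$ and the chosen $X'$-rule $\pi'$ has $j$th external node $x'_j$, then $\rv{x}_{\pi,v} = \rv{x}_{\pi',x'_j}$, while if $\pi$ is unused all its node variables are $\bot$. (ii) \emph{Terminal factors}: for each terminal edge $e$ in the RHS of a rule $\pi$ with $\att(e) = v_1 \cdots v_k$, a factor of arity $k+1$ (the $k$ node variables plus the switch for $\pi$) equal to $F(e)(\rv{x}_{\pi,v_1},\ldots,\rv{x}_{\pi,v_k})$ when $\pi$ is used and equal to $1$ when it is not (so unused rules contribute the empty product, weight $1$). (iii) A factor pinning the start rule to ``used'' (or, more simply, treating $S$ as always expanded and requiring exactly one $S$-rule). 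The total is $O(n_G)$ node variables plus $O(|G|)$ switch variables — which is $O(n_G)$ since every rule RHS has at least one node after applying Proposition~\ref{lem:factorize}, or we just fold the switch into a designated node of each RHS — and $O(n_G + m_G)$ factors: one per nonterminal edge for (i) and one per terminal edge for (ii), plus $O(1)$ for the start.

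The core of the argument is then a bijection-with-weights lemma: I would show that for each derivation $\deriv \in \mathcal{D}(G)$ and each assignment $\asst \in \Asst_\deriv$ of the derived graph, there is exactly one assignment $\hat\asst$ to the constructed factor graph that (a) satisfies all the structural constraints, and (b) has $w(\hat\asst) = w_\deriv(\asst)$; and conversely every constraint-satisfying assignment of nonzero weight arises this way. The map sends $\deriv$ to the rule set it uses, sets $\rv{x}_{\pi,v}$ to $\asst$ of the corresponding node of the derived graph (recall distinct RHS nodes become distinct derived-graph nodes, except endpoints identified along nonterminal edges — which is exactly what constraint (i) re-imposes), and sets all other node variables to $\bot$. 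Summing $w$ over all constraint-satisfying assignments then telescopes, rule by rule, into $\sum_{\deriv}\sum_{\asst} w_\deriv(\asst) = Z_G$, using that unused rules multiply in weight $1$.

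The main obstacle I expect is handling shared (external) nodes cleanly: a node of the derived graph may be ``the same node'' as endpoints living in several different right-hand sides up the derivation spine, so I must make sure the equality constraints in (i) chain correctly and that each derived-graph node is represented without double-counting its incident factors. Using nonreentrancy is what makes this tractable — each nonterminal, hence each ``gluing point,'' occurs at most once, so the chain of identifications is a simple path, not a tree of copies — but writing the bijection carefully, and checking that the $\bot$ sentinel interacts correctly with the terminal factors (they must read $1$, not $F(e)$ evaluated at $\bot$, whenever the owning rule is unused), is the delicate bookkeeping. The treewidth remark (that this construction can blow up treewidth, so Section~\ref{sec:inference_cfg} is preferred when applicable) needs no proof; it follows since the structural constraints can link many rule-variables together. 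I would relegate the full verification to the appendix, as the paper does for its other conversion results.
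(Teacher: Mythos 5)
Your construction is essentially the paper's: one Boolean switch per rule indicating whether it is used, gating factors that evaluate $F(e)$ only when the owning rule is active, hard constraints that propagate ``used'' status across nonterminal edges and equate the glued node variables, and a correctness argument that rewrites $Z_H$ as a sum over derivations in which the unused portion of the graph contributes weight $1$. Two implementation choices differ, both harmless. First, you neutralize the node variables of unused rules with a sentinel $\bot$ adjoined to each domain, whereas the paper keeps the original domains and attaches a $\text{CondNormalize}$ factor placing an arbitrary probability distribution $p$ on each such variable so it sums out to $1$; your version is arguably cleaner for infinite domains, at the cost of formally altering $\domain$. Second, you equate the endpoints of a nonterminal edge directly with the external nodes of each candidate child rule, while the paper routes these equalities through an intermediate cluster $C_X$ of variables per nonterminal together with ternary $\text{CondEquals}$ factors; your single factor per nonterminal edge keeps the count within $O(n_G+m_G)$ but has much larger arity. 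One point needs tightening: you state the structural constraint as ``$\pi$ is used iff exactly one $X'$-rule is used.'' Read literally, the unused direction only forbids \emph{exactly one} active child, so an assignment in which $\pi$ is off but two $X'$-rules are on (each spawning its own terminated subderivation) satisfies every constraint and contributes spurious weight, breaking your claimed bijection. The paper's $\text{CondOne}$ makes the false branch $\neg(\rv{B}_1\lor\cdots\lor\rv{B}_l)$, i.e., \emph{zero} active children; you need the same. (Both you and the paper implicitly assume the number of rules is $O(n_G)$, which can fail for empty right-hand sides; your fix via Proposition~\ref{lem:factorize} or folding the switch into a designated node is adequate.)
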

\begin{proof}
We construct a factor graph that encodes all derivations of $G$. (Example~\ref{eg:finitegrammar} in Appendix~\ref{app:finitegrammar} shows an example of this construction for a toy FGG.)
First, we add binary variables (with label $\rv{B}$ where $\domain(\rv{B}) = \{\text{true}, \text{false}\}$) that switch on or off parts of the factor graph (somewhat like the gates of \citet{minka+winn:2008}). For each nonterminal $X \in N$, we add $\rv{B}_X$, indicating whether $X$ is used in the derivation, and for each rule $\pi \in P$, we add $\rv{B}_\pi$, indicating whether $\pi$ is used.

Next, we create factors that constrain the $\rv{B}$ variables so that only one derivation is active at a time.
We write $P^X$ for the set of rules with left-hand side $X$, and $P^{\rightarrow X}$ for the set of rules which have a right-hand side edge labeled $X$. 
Define the following function:
\begin{equation*}
    \text{CondOne}_l(\rv{B}, \rv{B}_1, \ldots, \rv{B}_l) = 
        \begin{cases}
            \exists! \, i \in \{1, \ldots, l\} \, . \, \rv{B}_i & \text{if $\rv{B} = \text{true}$} \\
            \neg (\rv{B}_1 \lor \cdots \lor \rv{B}_l) & \text{if $\rv{B} = \text{false}$}
        \end{cases}
\end{equation*}
Then we add these factors, which ensure that if one of the rules in $P^{\rightarrow X}$ is used (or $X=S$), then exactly one rule in $P^X$ is used; if no rule in $P^{\rightarrow X}$ is used (and $X \neq S$), then no rule in $P^X$ is used. 
\begin{compactitem}
    \item For the start symbol $S$, add a factor $e$ with $att(e) = \rv{B}_S$ and $F(e)(\rv{B}_S) = (\rv{B}_S = \text{true})$.
    \item For $X \in N \setminus \{S\}$, let $P^{\rightarrow X} = \{\pi_1, \ldots, \pi_l\}$ and add a factor $e$ with $\att(e) = \rv{B}_X \, \rv{B}_{\pi_1} \cdots \rv{B}_{\pi_l}$ and $F(e) = \text{CondOne}_l$.    
    \item For $X \in N$, let $P^X = \{\pi_1, \ldots, \pi_l\}$ and add a factor $e$ with $att(e) = \rv{B}_X \, \rv{B}_{\pi_1} \cdots \rv{B}_{\pi_l}$ and $F(e) = \text{CondOne}_l$. 
\end{compactitem}

Next, define the function:
\begin{equation*}
\text{Cond}(\rv{B}, x) = \begin{cases}
x & \text{if $\rv{B} = \text{true}$} \\
1 & \text{otherwise.}
\end{cases}
\end{equation*}

For each rule $\pi \in P$, where $\pi = (X \rightarrow R)$ and $R = (V, E_N \cup E_T, \att, \lab^V, \lab^E)$, we construct a ``cluster'' $C_{\pi}$ of variables and factors:
\begin{compactitem}
    \item For each $v \in V$, add a variable $v'$ with the same label to $C_{\pi}$. Also, add a factor with endpoints $\rv{B}_{\pi}$ and $v'$ and function 
        $\text{CondNormalize}_{v'}(\rv{B}_{\pi}, v')$, defined to equal $\text{Cond}(\neg \rv{B}_\pi, p(v'))$,
    where $p$ is any probability distribution over $\domain(v')$. This ensures that if $\pi$ is not used, then $v'$ will sum out of the sum--product neatly.
    \item For each $e \in E_T$ where $att(e) = v_1 \cdots v_k$, add a new edge $e'$ with $\att(e') = \rv{B}_{\pi}\,v'_{1} \cdots v'_{k}$ and function $\text{CondFactor}_{e'}(\rv{B}_\pi, v_1', \ldots, v_k')$, defined to equal $\text{Cond}(\rv{B}_\pi, F(e)(v_1', \ldots, v_k'))$.
\end{compactitem}

Next, for each $X \in N$, let $l = |\type(X)|$. We create a cluster $C_X$ containing variables $v_{X,i}$ for $i=1, \ldots, l$, which represent the endpoints of $X$, such that $\lab^V(v_{X,i}) = \type(X)_i$. We give each an accompanying factor with endpoints $\rv{B}_\pi$ and  $v_{X,i}$ and function $\text{CondNormalize}_{v_{X,i}}$.

These clusters are used by the factors below, which ensure that if two variables are identified during rewriting, they have the same value. Define $\text{CondEquals}(\rv{B}, v, v') = \text{Cond}(\rv{B}, v=v')$.

\begin{compactitem}
\item For each $\pi \in P^{\rightarrow X}$, let $v_1, \ldots, v_l$ be the endpoints of the edge in $\pi$ labeled $X$. (By non-reentrancy, there can be only one such edge.) For $i = 1, \ldots, l$, create a factor $e$ where $\att(e) = \rv{B}_\pi \, v_{X,i} \, v_i$ and $F(e) = \text{CondEquals}$.
    \item For each $\pi \in P^X$, let $\ext$ be the external nodes of $\pi$. For $i = 1, \ldots, l$, create a factor $e$ where $\att(e) = \rv{B}_\pi \, 
    v_{X,i} \, \ext_i$ and $F(e) = \text{CondEquals}$.
\end{compactitem}

The resulting graph has $|N| + |P|$ binary variables, $n_G$ variables in the clusters $C_{\pi}$, and $\sum_{X \in N} |\type(X)| \leq n_G$ variables in the clusters $C_{X}$, so the total number of variables is in $O(n_G)$. It has $m_G$ $\text{CondFactor}_e$ factors, $n_G + \sum_{X \in N} |\type(X)| \leq 2n_G$ $\text{CondNormalize}_v$ factors, $2|N|$ $\text{CondOne}_l$ factors, and $2\sum_{(X \rightarrow R) \in P} |\ext_R| \leq 2n_G$ $\text{CondEquals}$ factors, so the total number of factors is in $O(n_G + m_G)$.

Appendix~\ref{app:finitegrammar} contains more information on this construction, including an example, a detailed proof that the sum--product is preserved, and a discussion of inference on the resulting graph.
\end{proof}

\subsection{Infinite variable domains, infinite graph languages}
\label{sec:inference_undecidable}

Finally, if we allow both (countably) infinite domains and infinite graph languages, then computing the sum--product is undecidable. This has already been observed even for single factor graphs with infinite variable domains \citep{dreyer+eisner:2009}, but we show further that this can be done using a minimal inventory of factors.

\begin{theorem} \label{thm:undecidable}
Let $G$ be a FGG whose variable domains are $\mathbb{N}$ and whose factors only use the successor relation and equality with zero. It is undecidable whether the sum--product of $G$ is zero.
\end{theorem}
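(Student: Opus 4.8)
The plan is to reduce from the halting problem for two-counter (Minsky) machines, which is undecidable. Given such a machine $M$ with finite state set $Q$, halt state $q_{\mathrm{halt}}$, and instructions of the usual types (increment a counter and jump; decrement a counter, assumed positive, and jump; test a counter for zero and branch), I will build a FGG $G_M$ over a single node label $\rv{N}$ with $\domain(\rv{N}) = \mathbb{N}$, whose only terminal factors are the successor relation $[\,y = x+1\,]$ and the zero test $[\,x = 0\,]$, such that $Z_{G_M} \neq 0$ iff $M$ halts from the initial configuration with both counters zero. Since ``$M$ halts'' is undecidable, so is ``$Z_{G_M} = 0$''.

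\emph{Encoding.} The nonterminals are $S$ with $\type(S) = \epsilon$ and one symbol $X_q$ per state $q \in Q$ with $\type(X_q) = \rv{N}\,\rv{N}$, the two endpoints of an $X_q$-edge holding the current values of the two counters. Every right-hand side will contain at most one nonterminal edge, so a complete $S$-derivation is forced to be a linear chain, and the chain is arranged to mirror a run of $M$. The start rule $S \to R_0$ creates two fresh nodes, pins each to $0$ with a zero-test factor, and attaches an $X_{q_0}$-edge. An increment of counter $c$ at state $q$ gives a rule $X_q \to R$ with $\ext$ equal to the two incoming counter nodes, a fresh node $y$, a successor factor forcing $y = (\text{old }c)+1$, and an $X_{q'}$-edge having $y$ in the $c$-slot and the other counter's external node passed through unchanged. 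A decrement of $c$ is identical except the successor factor forces $(\text{old }c) = y+1$, which decrements and enforces positivity simultaneously. A zero test ``if $c=0$ go to $q_t$ else $q_f$'' contributes two rules with left-hand side $X_q$: rule A places a zero-test factor on the $c$-node and attaches $X_{q_t}$; rule B introduces a fresh node $y$ with a successor factor forcing $(\text{old }c) = y+1$ and attaches $X_{q_f}$. Finally the halt rule $X_{q_{\mathrm{halt}}} \to R_{\mathrm{halt}}$ has its two external nodes and no edges at all, contributing multiplicative weight $1$ and terminating the chain. One checks easily that all type constraints in Definition~\ref{def:hrg} are met.

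\emph{Correctness.} Since $\type(S) = \epsilon$, the derived graph of any derivation has no external nodes, so every node is summed out. Starting from the two $0$-pinned nodes, the successor and zero-test factors along the chain force each counter node to a unique value by induction, unless some step is infeasible given those forced values (decrementing a node forced to $0$, or taking the $c=0$ branch when $c$ is forced positive), in which case that (derivation, assignment) pair has weight $0$; conversely, in rule B of a zero test the witness node $y$ ranges over $\mathbb{N}$ and the single successor factor makes its marginal contribution exactly $[\,(\text{old }c)\neq 0\,]$. Hence the only nonzero-weight pairs are those whose chain of states and forced counter values is literally the unique deterministic run of $M$, and each such pair has weight $1$; all other complete derivations (which may ``wander'' after a wrong branch) contribute $0$. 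Because a complete $S$-derivation must terminate at the sole nonterminal-free rule, the halt rule, such a derivation exists iff $M$ halts. Therefore $Z_{G_M} = 1$ if $M$ halts and $Z_{G_M} = 0$ otherwise, so deciding whether $Z_{G_M} = 0$ would decide the halting problem.

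\emph{Main obstacle.} The one non-routine point is simulating the ``else'' branch of a zero test, i.e.\ the inequality $c \neq 0$, using only the successor relation and equality-with-zero; this is handled by the existentially-quantified-witness trick above ($y$ with $[\,c = y+1\,]$, summed out, yields the indicator of $c \neq 0$). The remaining steps — verifying the type constraints on the rules and the induction that counter nodes are forced to unique values along each chain — are straightforward bookkeeping. I would close by noting that this strengthens the observation of \citet{dreyer+eisner:2009} by isolating a minimal factor inventory, and that the construction gives undecidability in any semiring with a distinguished $0$.
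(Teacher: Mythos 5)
Your proof is correct, but it takes a genuinely different route from the paper's. The paper reduces from the halting problem for Turing machines: it encodes the two halves of the tape as natural numbers in base $k=|\Gamma|$, writes each transition as an arithmetic constraint involving $+$, $\cdot$, integer division, and remainder, and then spends the second half of the argument implementing those arithmetic relations as recursive FGG nonterminals whose rules bottom out in successor and zero-equality factors. You instead reduce from the halting problem for two-counter Minsky machines, whose primitive operations (increment, decrement, zero-test) map directly onto the permitted factors, so no arithmetic gadgets are needed; the only non-routine step is your existential-witness trick for the $c\neq 0$ branch, where summing a fresh node $y$ against $[\,c=y+1\,]$ yields the indicator of $c\neq 0$, and you handle it correctly. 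The trade-off: your reduction is shorter and makes the ``minimal factor inventory'' claim nearly immediate, at the price of importing the undecidability of counter-machine halting (itself standard, but proved by a further reduction from Turing machines); the paper's construction is self-contained relative to Turing-machine halting and, as a by-product, shows that addition, multiplication, and division with remainder are all FGG-definable from successor and zero-equality, which is of some independent interest. Both arguments yield a sum--product that is exactly $1$ if the simulated machine halts and $0$ otherwise, so both establish the theorem.
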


\begin{proof}
By reduction from the halting problem for Turing machines. See Appendix~\ref{sec:tm}.
\end{proof}

\section{Conclusion}

Factor graph grammars are a powerful way of defining probabilistic models that permits practical inference. We plan to implement the algorithms described in this paper as differentiable operations and release them as open-source software. We will also explore techniques for optimizing inference in FGGs, for example, by automatically modifying rules to reduce their treewidth \citep{bilmes:2010} or reducing the cost of matrix inversions in Theorem~\ref{thm:inference_cfg} \citep{nederhof+satta:2008}. Another important direction for future work is the development of approximate inference algorithms for FGGs.

\section*{Broader Impact}

This research is of potential benefit to anyone working with structured probability models, including latent-variable neural networks. As this research is purely theoretical, we are not aware of any direct negative impacts.

\begin{ack}
We would like to thank the anonymous reviewers, especially Reviewer 3, for making numerous suggestions for improvement. We also thank Antonis Anastasopoulos, Justin DeBenedetto, Wes Filardo, Chung-Chieh Shan, and Xing Jie Zhong for their feedback. 

This material is based upon work supported by the National Science Foundation under Grant No.~2019291.  Any opinions, findings, and conclusions or recommendations expressed in this material are those of the authors and do not necessarily reflect the views of the National Science Foundation.
\end{ack}

\bibliography{references}

\begin{thebibliography}{32}
\expandafter\ifx\csname natexlab\endcsname\relax\def\natexlab#1{#1}\fi

\bibitem[{Bauderon and Courcelle(1987)}]{bauderon+courcelle:1987}
Michel Bauderon and Bruno Courcelle. 1987.
\newblock \href {https://doi.org/10.1007/BF01692060} {Graph expressions and
  graph rewriting}.
\newblock \emph{Mathematical Systems Theory}, 20:83--127.

\bibitem[{Bilmes(2010)}]{bilmes:2010}
Jeff Bilmes. 2010.
\newblock \href {https://doi.org/10.1109/MSP.2010.938078} {Dynamic graphical
  models}.
\newblock \emph{IEEE Signal Processing Magazine}, 27(6):29--42.

\bibitem[{Bilmes and Bartels(2003)}]{bilmes+bartels:2003}
Jeff Bilmes and Chris Bartels. 2003.
\newblock \href {https://arxiv.org/abs/1212.2448} {On triangulating dynamic
  graphical models}.
\newblock In \emph{Proc. UAI}, pages 47--56.

\bibitem[{Bodlaender(1993)}]{bodlaender:1996}
Hans~L. Bodlaender. 1993.
\newblock \href {https://doi.org/10.1145/167088.167161} {A linear time
  algorithm for finding tree-decompositions of small treewidth}.
\newblock In \emph{Proc. STOC}, pages 226--234.

\bibitem[{Bodlaender(1998)}]{bodlaender:1998}
Hans~L. Bodlaender. 1998.
\newblock \href {https://doi.org/10.1016/S0304-3975(97)00228-4} {A partial
  $k$-arboretum of graphs with bounded treewidth}.
\newblock \emph{Theoretical Computer Science}, 209:1--54.

\bibitem[{Buntine(1994)}]{buntine:jair1994}
Wray~L. Buntine. 1994.
\newblock \href {https://doi.org/10.1613/jair.62} {Operations for learning with
  graphical models}.
\newblock \emph{J. Artificial Intelligence Research}, 2:159--225.

\bibitem[{Chiang et~al.(2013)Chiang, Andreas, Bauer, Hermann, Jones, and
  Knight}]{chiang+:acl2013}
David Chiang, Jacob Andreas, Daniel Bauer, Karl~Moritz Hermann, Bevan Jones,
  and Kevin Knight. 2013.
\newblock \href {https://www.aclweb.org/anthology/P13-1091/} {Parsing graphs
  with hyperedge replacement grammars}.
\newblock In \emph{Proc. ACL}, volume~1, pages 924--932.

\bibitem[{Cohen et~al.(2011)Cohen, Simmons, and Smith}]{cohen+:2010}
Shay~B. Cohen, Robert~J. Simmons, and Noah~A. Smith. 2011.
\newblock \href {https://doi.org/10.1017/S1471068410000529} {Products of
  weighted logic programs}.
\newblock \emph{Theory and Practice of Logic Programming}, 11(2--3):263--296.

\bibitem[{Comon et~al.(2007)Comon, Dauchet, Gilleron, L\"oding, Jacquemard,
  Lugiez, Tison, and Tommasi}]{comon+:2007}
H.~Comon, M.~Dauchet, R.~Gilleron, C.~L\"oding, F.~Jacquemard, D.~Lugiez,
  S.~Tison, and M.~Tommasi. 2007.
\newblock \href {http://www.grappa.univ-lille3.fr/tata} {Tree automata
  techniques and applications}.
\newblock Release October, 12th 2007.

\bibitem[{Drewes et~al.(1997)Drewes, Kreowski, and Habel}]{drewes+:1997}
Frank Drewes, {Hans-J\"org} Kreowski, and Annegret Habel. 1997.
\newblock Hyperedge replacement graph grammars.
\newblock In Grzegorz Rozenberg, editor, \emph{Handbook of Graph Grammars and
  Computing by Graph Transformation}, pages 95--162. World Scientific.

\bibitem[{Dreyer and Eisner(2009)}]{dreyer+eisner:2009}
Markus Dreyer and Jason Eisner. 2009.
\newblock \href {https://www.aclweb.org/anthology/D09-1011} {Graphical models
  over multiple strings}.
\newblock In \emph{Proc. EMNLP}, pages 101--110.

\bibitem[{Eisner(2002)}]{eisner:2002}
Jason Eisner. 2002.
\newblock \href {https://doi.org/10.3115/1073083.1073085} {Parameter estimation
  for probabilistic finite-state transducers}.
\newblock In \emph{Proc. ACL}, pages 1--8.

\bibitem[{Getoor et~al.(2007)Getoor, Friedman, Koller, Pfeffer, and
  Taskar}]{getoor+:2007}
Lise Getoor, Nir Friedman, Daphne Koller, Avi Pfeffer, and Ben Taskar. 2007.
\newblock Probabilistic relational models.
\newblock In Lise Getoor and Ben Taskar, editors, \emph{Introduction to
  Statistical Relational Learning}, pages 129--174. MIT Press.

\bibitem[{Gildea(2011)}]{gildea:2011}
Daniel Gildea. 2011.
\newblock \href {https://doi.org/10.1162/coli\_a\_00040} {Grammar factorization
  by tree decomposition}.
\newblock \emph{Computational Linguistics}, 37(1):231--248.

\bibitem[{Goodman(1999)}]{goodman:1999}
Joshua Goodman. 1999.
\newblock Semiring parsing.
\newblock \emph{Computational Linguistics}, 25(4):573--606.

\bibitem[{Habel and Kreowski(1987)}]{habel+kreowski:1987}
Annegret Habel and Hans-Jörg Kreowski. 1987.
\newblock May we introduce to you: Hyperedge replacement.
\newblock In \emph{Proc.~Third International Workshop on Graph Grammars and
  Their Application to Computer Science}, volume 291 of \emph{Lecture Notes in
  Computer Science}, pages 15--26. Springer.

\bibitem[{Jones et~al.(2012)Jones, Andreas, Bauer, Hermann, and
  Knight}]{jones+:coling2012}
Bevan Jones, Jacob Andreas, Daniel Bauer, Karl~Moritz Hermann, and Kevin
  Knight. 2012.
\newblock \href {https://www.aclweb.org/anthology/C12-1083} {Semantics-based
  machine translation with hyperedge replacement grammars}.
\newblock In \emph{Proc. COLING}, pages 1359--1376.

\bibitem[{Koller and Friedman(2009)}]{koller+friedman:2009}
Daphne Koller and Nir Friedman. 2009.
\newblock \emph{Probabilistic Graphical Models: Principles and Techniques}.
\newblock MIT Press.

\bibitem[{Kschischang et~al.(2001)Kschischang, Frey, and
  Loeliger}]{kschischang+:2001}
Frank~R. Kschischang, Brendan~J. Frey, and Hans-Andrea Loeliger. 2001.
\newblock \href {https://doi.org/10.1109/18.910572} {Factor graphs and the
  sum-product algorithm}.
\newblock \emph{IEEE Trans. Information Theory}, 47(2):498--519.

\bibitem[{Lyngs{\o} and Pedersen(2002)}]{lyngso+pederson:2002}
Rune~B. Lyngs{\o} and Christian N.~S. Pedersen. 2002.
\newblock \href {https://doi.org/10.1016/S0022-0000(02)00009-0} {The consensus
  string problem and the complexity of comparing hidden {M}arkov models}.
\newblock \emph{J. Computer and System Sciences}, 65:545--569.

\bibitem[{McAllester et~al.(2008)McAllester, Collins, and
  Pereira}]{mcallester+:2008}
David McAllester, Michael Collins, and Fernando Pereira. 2008.
\newblock \href {https://doi.org/10.1016/j.jcss.2007.04.015} {Case-factor
  diagrams for structured probabilistic modeling}.
\newblock \emph{J. Computer and System Sciences}, 74(1):84--96.

\bibitem[{Melibari et~al.(2016)Melibari, Poupart, Doshi, and
  Trimponias}]{melibari+:2016}
Mazen Melibari, Pascal Poupart, Prashant Doshi, and George Trimponias. 2016.
\newblock \href {http://proceedings.mlr.press/v52/melibari16.html} {Dynamic sum
  product networks for tractable inference on sequence data}.
\newblock In \emph{Proc. International Conference on Probabilistic Graphical
  Models}, pages 345--355.

\bibitem[{Minka and Winn(2008)}]{minka+winn:2008}
Tom Minka and John Winn. 2008.
\newblock \href {https://papers.nips.cc/paper/3379-gates} {Gates}.
\newblock In \emph{Proc. NeurIPS}, pages 1073--1080.

\bibitem[{Nederhof and Satta(2008)}]{nederhof+satta:2008}
Mark-Jan Nederhof and Giorgio Satta. 2008.
\newblock \href {https://doi.org/10.1007/s11168-008-9052-8} {Computing
  partition functions of {PCFG}s}.
\newblock \emph{Research on Language and Computation}, 6:139--162.

\bibitem[{Obermeyer et~al.(2019)Obermeyer, Bingham, Jankowiak, Chiu, Pradhan,
  Rush, and Goodman}]{obermeyer+:icml2019}
Fritz Obermeyer, Eli Bingham, Martin Jankowiak, Justin Chiu, Neeraj Pradhan,
  Alexander Rush, and Noah Goodman. 2019.
\newblock \href {http://proceedings.mlr.press/v97/obermeyer19a.html} {Tensor
  variable elimination for plated factor graphs}.
\newblock In \emph{Proc. ICML}, pages 4871--4880.

\bibitem[{Poon and Domingos(2011)}]{poon+domingos:2011}
Hoifung Poon and Pedro Domingos. 2011.
\newblock \href {https://arxiv.org/abs/1202.3732} {Sum-product networks: A new
  deep architecture}.
\newblock In \emph{Proc. UAI}, pages 337--346.

\bibitem[{Pynadath and Wellman(1998)}]{pynadath+wellman:1998}
David~V. Pynadath and Michael~P. Wellman. 1998.
\newblock \href {https://doi.org/10.1109/34.655650} {Generalized queries on
  probabilistic context-free grammars}.
\newblock \emph{Trans. Pattern Analysis and Machine Intelligence},
  20(1):65--77.

\bibitem[{Schulman et~al.(2015)Schulman, Heess, Weber, and
  Abbeel}]{schulman+:nips2015}
John Schulman, Nicolas Heess, Theophane Weber, and Pieter Abbeel. 2015.
\newblock \href
  {https://papers.nips.cc/paper/5899-gradient-estimation-using-stochastic-computation-graphs}
  {Gradient estimation using stochastic computation graphs}.
\newblock In \emph{Proc. NeurIPS}.

\bibitem[{Smith and Eisner(2008)}]{smith+eisner:2008}
David~A. Smith and Jason Eisner. 2008.
\newblock \href {https://www.aclweb.org/anthology/D08-1016/} {Dependency
  parsing by belief propagation}.
\newblock In \emph{Proc. EMNLP}, pages 145--156.

\bibitem[{Stolcke(1995)}]{stolcke:1995}
Andreas Stolcke. 1995.
\newblock \href {https://www.aclweb.org/anthology/J95-2002} {An efficient
  probabilistic context-free parsing algorithm that computes prefix
  probabilities}.
\newblock \emph{Computational Linguistics}, 21(2):165--201.

\bibitem[{Stuhlmüller and Goodman(2012)}]{stuhlmuller+goodman:2012}
Andreas Stuhlmüller and Noah~D. Goodman. 2012.
\newblock \href {https://arxiv.org/abs/1206.3555} {A dynamic programming
  algorithm for inference in recursive probabilistic programs}.
\newblock In \emph{Proc. International Workshop on Statistical Relational AI
  (StarAI)}.

\bibitem[{van~de Meent et~al.(2018)van~de Meent, Paige, Yang, and
  Wood}]{vandemeent+:2018}
Jan-Willem van~de Meent, Brooks Paige, Hongseok Yang, and Frank Wood. 2018.
\newblock \href {https://arxiv.org/abs/1809.10756} {An introduction to
  probabilistic programming}.
\newblock ArXiv:1809.10756.

\end{thebibliography}

\clearpage

\appendix

\section{Simulating PCFGs}
\label{sec:pcfg}

\begin{example}
\label{eg:pcfg}
Below is a FGG for derivations of a PCFG in Chomsky normal form. The start symbol of the FGG is $\nt{S'}$ and the start symbol of the PCFG is $S$. Random variables $\rv{N}$ range over nonterminal symbols of the PCFG, and random variables $\rv{W}$ range over terminal symbols.
\shrinkdisplay{\begin{align*}
\begin{tikzpicture} 
\node[fac] at (0,0) { $\nt{S'}$ }; 
\end{tikzpicture} 
&\longrightarrow 
\begin{tikzpicture} 
\node[var] (n) at (0,0) { $\rv{N}_1$ }; 
\node[fac,label=right:{$\rv{N}_1 = S$}] at (0,1) {} edge (n);
\node[fac] at (0,-1) {$\nt{X}_2$} edge (n); 
\end{tikzpicture} 
&
\begin{tikzpicture} 
\node[ext](x) at (0,1) {$\rv{N}_1$};
\node[fac] at (0,0) { $\nt{X}$ } edge (x); 
\end{tikzpicture} 
&\longrightarrow 
\begin{tikzpicture} 
\node[ext] (n) at (0,1) { $\rv{N}_1$ }; 
\node[var] (n1) at (-1,-1) {$\rv{N}_2$}; 
\node[var] (n2) at (1,-1) {$\rv{N}_3$}; 
\node[fac,label=right:{$p(\rv{N}_1\rightarrow \rv{N}_2 \rv{N}_3)$}] at (0,0) {} edge (n) edge (n1) edge (n2); 
\node[fac] at (-1,-2) {$\nt{X}_4$} edge (n1);
\node[fac] at (1,-2) {$\nt{X}_5$} edge (n2);
\end{tikzpicture}
&
\begin{tikzpicture} 
\node[ext](x) at (0,1) {$\rv{N}_1$};
\node[fac] at (0,0) { $\nt{X}$ } edge (x); 
\end{tikzpicture} 
&\longrightarrow
\begin{tikzpicture} 
\node[ext] (n) at (0,1) { $\rv{N}_1$ }; 
\node[var] (n1) at (0,-1) { $\rv{W}_2$ }; \node[fac,label=right:{$p(\rv{N}_1 \rightarrow \rv{W}_2)$}] at (0,0) {} edge (n) edge (n1); 
\end{tikzpicture}
\end{align*}}
\end{example}

\begin{example}
\label{eg:pcfg_conjoin}
We can conjoin the FGG of Example~\ref{eg:pcfg} with the following FGG to constrain it to an input string $w$, with $n = |w|$, $0 \leq i < j < k \leq n$, and $1 \leq l \leq n$:
\shrinkdisplay{\begin{align*}
\begin{tikzpicture}
\node[fac] at (0,0) { $\pos{0,n}$ }; 
\end{tikzpicture} &
\longrightarrow
\begin{tikzpicture} 
\node[var](n1) at (0,0) { $\rv{N}_1$ };
\node[fac] at (0,-1) {$\pos{0,n}_2$} edge (n1); 
\end{tikzpicture}
&
\begin{tikzpicture}
\node[ext](x1) at (1,1) {$\rv{N}_1$};
\node[fac] at (1,0) { $\pos{i,j}$ } edge (x1);
\end{tikzpicture} 
&\longrightarrow
\begin{tikzpicture}[x=0.7cm]
\node[ext](n1) at (0,1) {$\rv{N}_1$};
\node[var](n2) at (-1,0) {$\rv{N}_2$};
\node[var](n3) at (1,0) {$\rv{N}_3$};
\node[fac] at (-1,-1) {$\pos{i,k}_4$} edge (n2);
\node[fac] at (1,-1) {$\pos{k,j}_5$} edge (n3);
\end{tikzpicture} 
&
\begin{tikzpicture} 
\node[ext](x1) at (0,1) {$\rv{N}_1$};
\node[fac] at (0,0) { $\pos{l-1,l}$ } edge (x1);
\end{tikzpicture} 
&\longrightarrow
\begin{tikzpicture} 
\node[ext](n1) at (0,1) {$\rv{N}_1$};
\node[var](w2) at (0,0) {$\rv{W}_2$};
\node[fac,label=below:{$\rv{W}_2 = w_l$}] at (0,-1) {} edge (w2);
\end{tikzpicture}
\end{align*}}
\end{example}

The resulting rules have a total of $O(n^3)$ variables in their right-hand sides. The largest right-hand side has 3 variables, so $k=2$. The variables range over nonterminals, so $m = |N|$ where $N$ is the CFG's nonterminal alphabet. Therefore, running the algorithm of Theorem~\ref{thm:inference_cfg} on this FGG takes $O(n_G m^{k+1}) = O(|N|^3 n^3)$ time, which is the same as the CKY algorithm. This construction generalizes easily to CFGs not in Chomsky normal form; applying Lemma~\ref{lem:factorize} would keep the inference complexity down to $O(n^3)$ (or $O(n^2)$ for a linear CFG).
\section{Relationship to other formalisms}
\label{sec:otherformalisms}

\subsection{Plate diagrams}
\label{sec:plate}

Plate diagrams are extensions of graphs that describe repeated structure in Bayesian networks \citep{buntine:jair1994} or factor graphs \citep{obermeyer+:icml2019}. A plate is a subset of variables/factors, together with a count $M$, indicating that the variables/factors inside the plate are to be replicated $M$ times. But there cannot be edges between different instances of a plate.
\begin{definition}
A \emph{plated factor graph} or \emph{PFG} \citep{obermeyer+:icml2019} is a factor graph $H = (V, E)$ together with a finite set $B$ of \emph{plates} and a function $P: V \cup E \rightarrow 2^B$ that assigns each variable and factor to a set of plates. If $b \in P(v)$ and $e$ is incident to $v$, then $b \in P(e)$.

The \emph{unrolling} of $H$ by $M: B \rightarrow \mathbb{N}$  is the factor graph that results from making $M(b)$ copies of every node $v$ such that $b \in P(v)$ and every edge $e$ such that $e \in P(e)$.
\end{definition}

\citet{obermeyer+:icml2019} give an algorithm for computing the sum--product of a PFG. It only succeeds on some PFGs. An example for which it fails is the set of all restricted Boltzmann machines (fully-connected bipartite graphs); 
one of their main results is to characterize the PFGs for which their algorithm succeeds. Below, we show how to convert these PFGs to FGGs.

\begin{proposition}
Let $H$ be a PFG. If the sum--product algorithm of \citet{obermeyer+:icml2019} succeeds on $H$, then there is a FGG $G$ such that for any $M: B \rightarrow \mathbb{N}$, there is a FGG $G_M$ such that $G \conj G_M$ generates one graph, namely the unrolling of $H$ by $M$.
\end{proposition}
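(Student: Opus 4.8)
The plan is to construct, from $H$ alone, a single recursive FGG $G$ that generates exactly the set of all unrollings of $H$, i.e.\ $\{\text{unrolling of }H\text{ by }M : M\colon B\to\mathbb{N}\}$, and then, for each $M$, a small nonrecursive FGG $G_M$ whose unique derived graph encodes the choice of counts $M$. Conjoining the two synchronizes their derivation trees and leaves exactly the unrolling of $H$ by $M$. This is the same pattern used for the HMM in Example~\ref{eg:conjoinfgg}, where $G_w$ pins down the string and hence its length, and for the CFG in Example~\ref{eg:pcfg_conjoin}.

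To build $G$, I would view the unrolling of $H$ as a system of $|B|$ nested loops, one per plate, which a HRG simulates by a family of mutually recursive rules. Using the plate ordering witnessed by the success of the algorithm of \citet{obermeyer+:icml2019}, assign each variable and factor of $H$ to the innermost of its plates in that order. For each plate $b$ (with several copies of the nonterminal if needed, one per ambient context of outer plates), introduce a nonterminal whose endpoints are the already-created copies of outer-plate variables that are incident to something assigned to $b$ or to a plate nested inside $b$; give it two kinds of rules: one that creates one fresh copy of the variables and factors assigned to $b$, reattaches a nonterminal edge for each plate nested directly inside $b$, and recurses on $b$; and one that stops. Inherit $\domain$ and $F$ from $H$. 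The role of the hypothesis is precisely to guarantee that these interfaces are bounded -- equivalently, that all unrollings have bounded treewidth -- so that every right-hand side is finite and $G$ is a legitimate FGG; for the restricted Boltzmann machines this fails, as it must, since those unrollings have unbounded treewidth and so cannot be generated by any FGG.

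For $G_M$, I would take the ``counting'' FGG with the same node sets, external-node sequences, and nonterminal-edge structure as $G$ in every rule, but with no terminal edges and with each plate nonterminal decorated by a residual count: the decorated nonterminal for plate $b$ with residual count $t$ has only the recurse-shaped rule (decrementing to $t-1$) when $t>0$, and only the stop-shaped rule when $t=0$, and a freshly spawned nonterminal for a plate nested inside $b$ is initialized with residual count $M(b)$. Then $G_M$ generates a single graph. Because $G$ and $G_M$ share rule skeletons by construction, each rule of $G$ is conjoinable with the correspondingly shaped rule of $G_M$, and $G\sqcap G_M$ has the effect of annotating each derivation of $G$ with residual counts; the only surviving derivation is the one that unrolls each plate $b$ exactly $M(b)$ times, and its derived graph (carrying $G$'s factors, since $G_M$ contributes none) is the unrolling of $H$ by $M$.

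The main obstacle is turning the operational success criterion of \citet{obermeyer+:icml2019} into an explicit, finite set of HRG rules -- in particular, extracting from it the plate ordering and the bound on interface sizes, and checking that ``assign each variable/factor to the innermost of its plates'' then yields well-defined finite right-hand sides. Once $G$ is pinned down, correctness of $G_M$ and of the conjunction is a routine induction on derivation trees using $w_G(\deriv,\asst)=w_\deriv(\asst)$ and the definition of $\sqcap$.
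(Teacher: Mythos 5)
Your proposal is essentially the paper's construction: the paper likewise mirrors the run of the Obermeyer et al.\ sum--product algorithm, repeatedly replacing each connected component whose plate set $L$ is maximal by a fresh nonterminal whose external nodes are the frontier variables $V_f$, with the success criterion (that $L' = \bigcup_{v\in V_f} P(v)$ is a proper subset of $L$) guaranteeing each step eliminates at least one plate; the only presentational difference is that the paper builds $G \conj G_M$ directly via count-superscripted nonterminals $X^n, X^{n-1}, \ldots, X^0$ with $n=\prod_{b\in L\setminus L'}M(b)$, rather than separating the recurse/stop grammar $G$ from the residual-count grammar $G_M$ as you do. One caution: you organize the recursion around a nesting order on plates (the ``innermost'' plate of each variable), but PFG plates need not nest, so you should instead iterate on plate \emph{sets} ordered by cardinality and allow a single recursive nonterminal to eliminate all of $L\setminus L'$ at once, which is exactly the form the success criterion licenses.
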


\begin{algorithm}
\begin{algorithmic}
\While{$E \neq \emptyset$}
  \State{let $e = \argmax_{e \in E} |P(e)|$ (breaking ties arbitrarily) and $L = P(e)$}
  \State{let $H_L$ be the subgraph of nodes and edges of $H$ whose plate set is $L$}
  \For{each connected component $H_c$ of $H_L$}
    \State{let $V_f$ be the variables not in $H_c$ but incident to factors in $H_c$}
    \State{let $L' = \cup_{v \in V_f} P(v)$}
    \If{$L=L'$}
      \State{\textbf{error}}
    \EndIf
    \State{let $X$ be a fresh nonterminal}
    \State{let $n = \prod_{b \in L \setminus L'} M(b)$}
    \State{replace $H_c$ with an edge with label $X^n$ and endpoints $V_f$}
    \For{$i \leftarrow n, \ldots, 1$}
      \State{create rule $X^{i} \rightarrow R$ where $R$ has:
      \begin{compactitem}[labelindent=\the\algorithmicleftmargin,leftmargin=!]\item internal nodes and edges from $H_c$ \item external nodes $V_f$ \item an edge with label $X^{i-1}$ and endpoints $V_f$ \end{compactitem}}
    \EndFor
    \State{create rule $X^0 \rightarrow R$ where $R$ has external nodes $V_f$ and no other nodes/edges}
    \EndFor
\EndWhile
\State{create rule $S \rightarrow H$}
\end{algorithmic}
\caption{Procedure for converting a PFG $H$ and count assignment $M$ to a FGG.}
\label{alg:pfg}
\end{algorithm}

\begin{proof}
We just describe how to construct $G \conj G_M$ directly; hopefully, it should be clear how to construct $G$ and $G_M$ separately ($G$ has factors but not counts; $G_M$ has counts but not factors).
Algorithm~\ref{alg:pfg} converts $H$ and $M$ to $G \conj G_M$. It has the same structure as the sum--product algorithm of \citet{obermeyer+:icml2019} and therefore works on the same class of PFGs. 
\end{proof}

If the algorithm of \citet{obermeyer+:icml2019} fails on a PFG, there might not be an equivalent FGG. In particular, FGGs cannot generate the set of RBMs, because a $m \times n$ RBM has treewidth $\min(m,n)$, so the set of all RBMs has unbounded treewidth and can't be generated by a HRG. 

Although, in this respect, FGGs are less powerful than PFGs, we view this as a strength, not a weakness. Because FGGs inherently generate graphs of bounded treewidth, our sum--product algorithm (Theorem~\ref{thm:inference_cfg}) works on all FGGs, and no additional constraints are needed to guarantee efficient inference.

\begin{example}
The following PFG is from \citet{obermeyer+:icml2019}:
\begin{center}
\begin{tikzpicture}[x=1.33cm,y=1.33cm]
\node[var](x) at (1,0) {$\rv{X}$};
\node[var](y) at (3,0) {$\rv{Y}$};
\node[fac,label=above:{$F$}] at (0,0) {} edge (x);
\node[fac,label=above:{$H$}] at (2,0) {} edge (x) edge (y);
\node[fac,label=above:{$G$}] at (4,0) {} edge (y);
\draw[plate] (1.5,-0.6) rectangle (4.5,0.6); \node[anchor=south east] at (4.5,-0.6) {$I$};
\draw[plate] (1.6,-0.5) rectangle (2.5,0.5); \node[anchor=south east] at (2.5,-0.5) {$J$};
\end{tikzpicture}
\end{center}

Converting to a FGG produces the following rules (in order of their construction by the above algorithm):
\shrinkdisplay{\begin{align*}
\begin{tikzpicture}
\node[ext](x) at (-1,0) {$\rv{X}_1$};
\node[ext](y) at (1,0) {$\rv{Y}_2$};
\node[fac] {$\nt{A}^j$} edge (x) edge (y);
\end{tikzpicture}
&\longrightarrow
\begin{tikzpicture}
\node[ext](x) at (-1,0) {$\rv{X}_1$};
\node[ext](y) at (1,0) {$\rv{Y}_2$};
\node[fac,label=above:{$H$}] at (0,0) {} edge (x) edge (y);
\node[fac] at (0,-1) {$\nt{A}^{j-1}_3$} edge (x) edge (y);
\end{tikzpicture}
&1 \leq j &\leq J 
\\
\\[1ex]
\begin{tikzpicture}
\node[ext](x) at (-1,0) {$\rv{X}_1$};
\node[ext](y) at (1,0) {$\rv{Y}_2$};
\node[fac] {$\nt{A}^0$} edge (x) edge (y);
\end{tikzpicture}
&\longrightarrow
\begin{tikzpicture}
\node[ext](x) at (-1,0) {$\rv{X}_1$};
\node[ext](y) at (1,0) {$\rv{Y}_2$};
\end{tikzpicture}
\\
\\[1ex]
\begin{tikzpicture}
\node[ext](x) at (-1,0) {$\rv{X}_1$};
\node[fac] {$\nt{B}^i$} edge (x);
\end{tikzpicture}
&\longrightarrow
\begin{tikzpicture}
\node[ext](x) at (-1,0) {$\rv{X}_1$};
\node[var](y) at (1,0) {$\rv{Y}_2$};
\node[fac] at (0,0) {$\nt{A}^J_3$} edge (x) edge (y);
\node[fac] at (0,-1) {$\nt{B}^{i-1}_4$} edge (x);
\node[fac,label=above:{$G$}] at (2,0) {} edge (y);
\end{tikzpicture} 
&1 \leq i &\leq I
\\
\\[1ex]
\begin{tikzpicture}
\node[ext](x) at (-1,0) {$\rv{X}_1$};
\node[fac] {$\nt{B}^0$} edge (x);
\end{tikzpicture}
&\longrightarrow
\begin{tikzpicture}
\node[ext](x) at (-1,0) {$\rv{X}_1$};
\end{tikzpicture}
\\
\\[1ex]
\begin{tikzpicture}
\node[fac] {$\nt{S}$};
\end{tikzpicture}
&\longrightarrow
\begin{tikzpicture}
\node[var](x) at (-1,0) {$\rv{X}_1$};
\node[fac] at (0,0) {$\nt{B}^I_2$} edge (x);
\node[fac,label=above:{$F$}] at (-2,0) {} edge (x);
\end{tikzpicture}
\end{align*}}
\end{example}

\subsection{Dynamic graphical models}
\label{sec:dgm}

For simplicity, we only consider binary factors, which we draw as directed edges, and we ignore edge labels.
\begin{definition}
A \emph{dynamic graphical model} or \emph{DGM} \citep{bilmes:2010} is a tuple $(H_1, H_2, H_3, E_{12}, E_{22}, E_{23})$, where the $H_i = (V_i, E_i)$ are factor graphs and the $E_{ij} \subseteq V_i \times V_j$ are sets of edges from $H_i$ to $H_j$.

A DGM specifies how to construct, for any length $n \geq 2$, a factor graph \[H^n = (V_1 \cup V_2 \times \{1, \ldots, n\} \cup V_3, E),\] where $E$ is defined by:
\begin{compactitem}
\item If $(u,v) \in E_{12}$, add an edge from $u$ to $(v,1)$.
\item If $(u,v) \in E_{22}$, add an edge from $(u,i-1)$ to $(v,i)$ for all $1 < i \leq n$.
\item If $(u,v) \in E_{23}$, add an edge from $(u,n)$ to $v$.
\end{compactitem}
\end{definition}

\begin{proposition}
Given a DGM $D = (H_1, H_2, H_3, E_{12}, E_{22}, E_{23})$, there is a FGG $G$ such that for any count $n \geq 2$, there is another FGG $G_n$ such that $G \conj G_n$ generates exactly one graph, the unrolling of $D$ by $n$.
\end{proposition}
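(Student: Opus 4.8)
The plan is to mimic the PFG construction (Algorithm~\ref{alg:pfg} and its proposition): build one FGG $G$, depending only on $D$, that generates all unrollings $H^n$ for $n \geq 2$, and, for each $n$, a small ``counter'' FGG $G_n$ whose conjunction with $G$ prunes the language down to the single graph $H^n$.

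For $G$, let $U \subseteq V_2$ be the set of variables of the repeated chunk $H_2$ that are the source of some edge in $E_{22} \cup E_{23}$; these are exactly the variables of chunk $i$ that can be referenced by later material (chunk $i{+}1$, or the epilogue if $i=n$), so they form the ``frontier'' that must be threaded through the grammar. Introduce one nonterminal $X$ with $\type(X)$ equal to the sequence of labels of the variables in $U$ (in some fixed order); its $X$-type derivations will generate ``chunks $i{+}1,\dots,n$ together with $H_3$'', dangling from a copy of the $U$-variables of chunk $i$. Then $G$ has three rules. (i) $S \to R_S$, where $R_S$ contains a fresh copy of $H_1$ with its factors, a fresh copy of $H_2$ (``chunk $1$'') with its factors, a terminal edge carrying the corresponding binary factor for each $(u,v) \in E_{12}$ between the $H_1$-copy of $u$ and the chunk-$1$ copy of $v$, and a nonterminal $X$-edge whose endpoints are the chunk-$1$ copies of $U$. (ii) The recursive rule $X \to R_X$, whose external nodes are copies $u^-$ of $U$ (``previous chunk''), and which contains a fresh copy of $H_2$ (``current chunk'') with its factors, a terminal edge with the appropriate binary factor for each $(u,v)\in E_{22}$ from $u^-$ to the current-chunk copy of $v$, and a nonterminal $X$-edge whose endpoints are the current-chunk copies of $U$. (iii) The base rule $X \to R_0$, with external nodes $u^-$ copies of $U$, containing a fresh copy of $H_3$ with its factors and, for each $(u,v) \in E_{23}$, a terminal edge with the appropriate factor from $u^-$ to the $H_3$-copy of $v$. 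A straightforward induction on the number of applications of (ii) shows that applying (i), then (ii) exactly $n-1$ times, then (iii), yields the unrolling $H^n$, and that these are the only derivations of $G$; hence $\mathcal{D}(G)$ generates exactly $\{H^n : n \geq 2\}$.

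For $G_n$, take the same node sets and node labels in each right-hand side but delete all terminal edges (so $G_n$ contributes no factors), and replace $X$ by counter-indexed nonterminals $X^0,\dots,X^{n-1}$: rule (i) becomes $S \to$ (same nodes, $X^{n-1}$-edge); rule (ii) becomes, for each $j = 1,\dots,n-1$, a rule $X^{j} \to$ (same nodes, $X^{j-1}$-edge); rule (iii) becomes $X^{0} \to$ (same nodes, no nonterminal edge). Then $G_n$ is nonrecursive and generates a single graph. Each $G$-rule is conjoinable with the $G_n$-rule(s) that share its node set, node labels, external nodes, and nonterminal-edge attachment --- the required type equalities $\type(X) = \type(X^j)$ hold by construction --- so $G \conj G_n$ is, rule-for-rule, $G$ with $X$ relabelled to $\langle X, X^j\rangle$ and with the extra index forcing exactly $n-1$ recursive steps; it is nonrecursive and its unique derivation derives $H^n$ by the induction above.

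The only real work is the frontier bookkeeping --- pinning down which variables of a chunk must be exported as external nodes of $X$ --- and the boundary cases: the prologue $H_1$ and first chunk live in the $S$-rule, the epilogue $H_3$ lives in the base rule, and the $n \geq 2$ requirement must line up with ``(i), then (ii) $n-1$ times, then (iii)''. One also has to check the conjoinability conditions literally (common $V$, $\lab^V$, $\ext$, and $E_N$ with matching endpoint types), and that the unused external nodes of $R_0$ and the endpoint identifications performed during replacement do not merge any variables that should stay distinct. All of this is routine and parallels the PFG case; I would relegate the verification that the derived graph equals $H^n$, together with the explicit description of $G$ and $G_n$ separately (rather than just $G \conj G_n$), to the appendix.
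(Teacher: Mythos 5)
Your construction is correct and essentially identical to the paper's: a start rule holding $H_1$ plus the first $H_2$ chunk and $E_{12}$, a linearly recursive rule threading the $E_{22}$ frontier through one nonterminal, and a counter-indexed copy of the grammar whose conjunction pins the number of chunks to $n$. The only (cosmetic) difference is that the paper places the final $H_2$ chunk together with $H_3$ in the base rule, so its frontier needs only the $E_{22}$ sources rather than the $E_{22}\cup E_{23}$ sources you carry.
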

\begin{proof}
Again, we give an algorithm for constructing $G \conj G_n$, and hopefully, it should be clear how to construct $G$ and $G_n$ separately.
Create the following rules:
\begin{compactitem}
\item $\nt{S} \rightarrow R$, where $R$ contains
\begin{compactitem}
\item Nodes and edges from $H_1$, $H_2$, and $E_{12}$
\item An edge labeled $\nt{A}^{n-1}$ and endpoints $\{u \mid (u,v) \in E_{22}\}$.
\end{compactitem}
\item $\nt{A}^{i} \rightarrow R$, where $R$ contains
\begin{compactitem}
\item Nodes and edges from $H_2$
\item If $(u,v) \in E_{22}$, $R$ has an external node $u'$
\item For each $(u,v) \in E_{22}$, an edge from $u'$ to $v$ 
\item An edge labeled $\nt{A}^{i-1}$ and endpoints $\{u \mid (u,v) \in E_{22}\}$.
\end{compactitem}
\item $\nt{A}^1 \rightarrow R$, where $R$ contains
\begin{compactitem}
\item Nodes and edges from $H_2$, $H_3$, and $E_{23}$
\item If $(u,v) \in E_{22}$, $R$ has an external node $u'$
\item For each $(u,v) \in E_{22},$ an edge from $u'$ to $v$.
\end{compactitem}
\end{compactitem}
\end{proof}

\begin{example}
\citet{bilmes:2010} give the following example of a DGM. All factors have two endpoints, and we draw them as directed edges instead of the usual squares. We draw the edges in $E_{22}$ with dotted lines.
\shrinkdisplay{\begin{center}
\begin{tikzpicture}
\foreach \j in  {1,2,3} {
  \node at (\j,-0.5) {$H_\j$};
  \foreach \i in  {1,...,5} {
    \node[var](n\i\j) at (\j, -\i) {};
    }
}
\begin{scope}[every edge/.append style={-latex},every loop/.append style={-latex}]
\foreach \j in {1,2,3} {
\draw (n1\j) edge (n2\j);
\draw (n2\j) edge (n3\j);
\draw (n4\j) edge (n3\j);
\draw (n4\j) edge (n5\j);
\draw (n4\j) edge[bend left] (n1\j) edge[bend left](n2\j);
}
\draw (n21) edge (n22);
\draw (n31) edge (n22) edge (n42);
\draw (n41) edge (n42);
\draw (n22) edge (n23);
\draw (n32) edge (n23) edge (n43);
\draw (n42) edge (n43);
\begin{scope}[every edge/.append style={densely dotted}]
\draw (n22) edge[out=30,in=60,loop] (n22);
\draw (n32) edge[bend right] (n22) edge[bend left] (n42);
\draw (n42) edge[out=-30,in=-60,loop] (n42);
\end{scope}
\end{scope}
\end{tikzpicture}
\end{center}}

The resulting FGG:
\shrinkdisplay{\begin{align*}
\begin{tikzpicture}
\node[fac] {$\nt{S}$};
\end{tikzpicture}
&\longrightarrow
\begin{tikzpicture}
\begin{scope}[yshift=3cm]
\foreach \i in  {1,...,5} {
  \foreach \j in {1,2} {
    \node[var](n\i\j) at (\j, -\i) {};
  }
}
\begin{scope}[every edge/.append style={-latex},every loop/.append style={-latex}]
\foreach \j in {1,2} {
\draw (n1\j) edge (n2\j);
\draw (n2\j) edge (n3\j);
\draw (n4\j) edge (n3\j);
\draw (n4\j) edge (n5\j);
\draw (n4\j) edge[bend left] (n1\j) edge[bend left](n2\j);
}
\end{scope}
\draw (n21) edge (n22);
\draw (n31) edge (n22) edge (n42);
\draw (n41) edge (n42);
\node[fac] at (3,-3) {$\nt{A}^{n-1}$} edge (n22) edge (n32) edge (n42);
\end{scope}
\end{tikzpicture}
&\qquad
\begin{tikzpicture}
\begin{scope}[yshift=3cm]
\foreach \i in {2,3,4} {
    \node[ext](x\i) at (-1, -\i) {};
}
\end{scope}
\node[fac] {$\nt{A}^i$} edge (x2) edge (x3) edge (x4);
\end{tikzpicture}
&\longrightarrow
\begin{tikzpicture}
\begin{scope}[yshift=3cm]
\foreach \i in {2,3,4} {
    \node[ext](x\i) at (-1, -\i) {};
}
\foreach \i in {1,...,5} {
    \node[var](n\i) at (0, -\i) {};
}
\begin{scope}[every edge/.append style={-latex},every loop/.append style={-latex}]
\draw (x2) edge (n2);
\draw (x3) edge (n2) edge (n4);
\draw (x4) edge (n4);
\draw (n1) edge (n2);
\draw (n2) edge (n3);
\draw (n4) edge (n3);
\draw (n4) edge (n5);
\draw (n4) edge[bend left] (n1) edge[bend left](n2);
\end{scope}
\node[fac] at (1,-3) {$\nt{A}^{i-1}$} edge (n2) edge (n3) edge (n4);
\end{scope}
\end{tikzpicture}
&\qquad
\begin{tikzpicture}
\begin{scope}[yshift=3cm]
\foreach \i in {2,3,4} {
    \node[ext](x\i) at (-1, -\i) {};
}
\end{scope}
\node[fac] {$\nt{A}^1$} edge (x2) edge (x3) edge (x4);
\end{tikzpicture}
&\longrightarrow
\begin{tikzpicture}
\begin{scope}[yshift=3cm]
\foreach \i in {2,3,4} {
    \node[ext](x\i) at (0, -\i) {};
}
\foreach \i in  {1,...,5} {
  \foreach \j in {1,2} {
    \node[var](n\i\j) at (\j, -\i) {};
  }
}
\begin{scope}[every edge/.append style={-latex},every loop/.append style={-latex}]
\foreach \j in {1,2} {
\draw (n1\j) edge (n2\j);
\draw (n2\j) edge (n3\j);
\draw (n4\j) edge (n3\j);
\draw (n4\j) edge (n5\j);
\draw (n4\j) edge[bend left] (n1\j) edge[bend left](n2\j);
}
\end{scope}
\draw (x2) edge (n21);
\draw (x3) edge (n21) edge (n41);
\draw (x4) edge (n41);
\draw (n21) edge (n22);
\draw (n31) edge (n22) edge (n42);
\draw (n41) edge (n42);
\end{scope}
\end{tikzpicture}
\end{align*}}
where, in the middle rule, $1 < i < n$.
\end{example}

Running the algorithm of Theorem~\ref{thm:inference_cfg} would \emph{not} be guaranteed to achieve the same time complexity as that of \citep{bilmes+bartels:2003}, which searches through alternative ways of dividing the unrolled factor graph into time slices.

\subsection{Case--factor diagrams and sum--product networks}
\label{sec:cfd}

\newcommand{\scope}{\mathit{scope}}

Case--factor diagrams \citep{mcallester+:2008} and sum--product networks \citep{poon+domingos:2011} are compact representations of probability distributions over assignments to Boolean variables. They generalize both Markov networks and PCFGs.

Both formalisms represent models as rooted directed acyclic graphs (DAGs), with edges directed away from the root, in which some nodes mention variables. If $D$ is a DAG, for any node $v \in D$, let $\scope(v)$ be the set of variables mentioned in $v$ or any descendant of $v$.

\begin{definition}
A \emph{case--factor diagram (CFD) model} is a pair $(D, \Psi)$, where $D$ is a rooted DAG with root $r$, each of whose nodes is one of the following:
\begin{compactitem}
\item $\mathsf{case}(x)$ with two children $v_1$ and $v_2$, where $x$ is a variable not in $\scope(v_1) \cup \scope(v_2)$.
\item $\mathsf{factor}$ with two children $v_1$ and $v_2$, where $\scope(v_1) \cap \scope(v_2) = \emptyset$.
\item $\mathsf{unit}$ with no children.
\item $\mathsf{empty}$ with no children.
\end{compactitem}
And $\Psi : \scope(r) \rightarrow \mathbb{R}_{\geq 0}$ assigns a cost to each variable in $\scope(r)$.

A CFD model defines a probability distribution over assignments to its variables.
We compute quantities $q(v, \asst)$ and $Z(v)$ for each node $v$ as follows.
Let $v_1, v_2$ be the children of $v$, if any.
\begin{align*}
v &= \mathsf{case}(x) & q(v, \asst) &= \begin{cases}
e^{-\Psi(x)} \, q(v_1, \asst) & \text{if $\asst(x) = 1$}\\
q(v_2, \asst) & \text{if $\asst(x) = 0$}
\end{cases} & Z(v) &= e^{-\Psi(x)} \, Z(v_1) + Z(v_2) \\ 
v &= \mathsf{factor} & q(v, \asst) &= q(v_1, \asst) \, q(v_2, \asst) & Z(v) &= Z(v_1) \, Z(v_2) \\ 
v &= \mathsf{unit} & q(v, \asst) &= 1 & Z(v) &= 1 \\
v &= \mathsf{empty} & q(v, \asst) &= 0 & Z(v) &= 0 
\end{align*}
Define $q(\asst) = q(r, \asst)$ and $Z = Z(r)$. Then $P(\asst) = q(\asst) / Z$.
\end{definition}

\begin{proposition}
If $(D, \Psi)$ is a CFD model, there is a FGG $G$ such that $(D, \Psi)$ and $G$ have the same sum--product, and for any assignment $\asst$ of $(D, \Psi)$, there is a FGG $G_\asst$ such that the sum--product of $G \land G_\asst$ equals $q(\asst)$.
\end{proposition}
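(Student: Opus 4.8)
The plan is to use the DAG of $D$ as the skeleton of $G$. Fix a total order on the variables of the model (i.e.\ on $\scope(r)$), let $\rv{B}$ be the only node label with $\domain(\rv{B}) = \{0,1\}$, and for every node $v$ of $D$ introduce a nonterminal $X_v$ with $\type(X_v)$ equal to the variables of $\scope(v)$ in that order; also introduce a start symbol $\nt{S}$ with $\type(\nt{S}) = \epsilon$. The rules mirror the node types. For $v = \mathsf{unit}$ ($\scope(v) = \emptyset$), add $X_v \to R$ with $R$ the empty fragment. For $v = \mathsf{empty}$, add no rule, so $X_v$ derives nothing and $\psi_{X_v} = 0$. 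For $v = \mathsf{factor}$ with children $v_1, v_2$, since $\scope(v) = \scope(v_1) \uplus \scope(v_2)$, add $X_v \to R$ where $R$ has external nodes $\scope(v)$, two nonterminal edges labelled $X_{v_1}, X_{v_2}$ attached to the $\scope(v_1)$- and $\scope(v_2)$-nodes respectively, and no factors. For $v = \mathsf{case}(x)$ with children $v_1, v_2$, add two rules, both with external nodes $\scope(v)$: one whose only factor is the unary $(x = 1)\,e^{-\Psi(x)}$ on node $x$ and whose only nonterminal edge is $X_{v_1}$ on the $\scope(v_1)$-nodes; and one whose only factor is $(x = 0)$ on $x$ and whose only nonterminal edge is $X_{v_2}$ on the $\scope(v_2)$-nodes. (Because $\scope(v_1)$ and $\scope(v_2)$ may overlap, the nodes of $\scope(v_2) \setminus (\scope(v_1)\cup\{x\})$, resp.\ $\scope(v_1)\setminus(\scope(v_2)\cup\{x\})$, occur in these right-hand sides as external nodes incident to no edge.) Finally add $\nt{S} \to R_0$ where $R_0$ has all of $\scope(r)$ as \emph{internal} nodes and a single nonterminal edge $X_r$ attached to them. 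Since $D$ is acyclic, the nonterminal graph of $G$ is acyclic, so $G$ is nonrecursive, $\mathcal{D}(G)$ is finite, and $Z_G = \psi_{\nt{S}}()$ is given by the equation system in the proof of Theorem~\ref{thm:inference_cfg}.

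Next I would prove by induction on $D$ (leaves upward) that $\psi_{X_v}(\alpha) = q(v,\asst)$ for every node $v$ and every $\alpha \in \Asst_{X_v}$, where $\asst$ is any assignment of $(D,\Psi)$ that restricts to $\alpha$ on $\scope(v)$ (the recursion defining $q$ shows $q(v,\asst)$ depends only on that restriction). The $\mathsf{unit}$ and $\mathsf{empty}$ cases are immediate. For $\mathsf{factor}$, the right-hand side has no internal nodes, so $\tau_R(\alpha) = \psi_{X_{v_1}}(\alpha|_{\scope(v_1)})\,\psi_{X_{v_2}}(\alpha|_{\scope(v_2)})$, which equals $q(v_1,\asst)\,q(v_2,\asst) = q(v,\asst)$ by the induction hypothesis and the $q$-recursion for $\mathsf{factor}$ nodes. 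For $\mathsf{case}(x)$, again there are no internal nodes, so each $\tau_R$ collapses; the dangling external nodes contribute nothing since they touch no factor, giving $\tau$-values $(\alpha(x)=1)\,e^{-\Psi(x)}\,\psi_{X_{v_1}}(\alpha|_{\scope(v_1)})$ and $(\alpha(x)=0)\,\psi_{X_{v_2}}(\alpha|_{\scope(v_2)})$, whose sum is exactly $q(\mathsf{case}(x),\asst)$. Applying this at the start rule, $\tau_{R_0}()$ sums $\psi_{X_r}(\alpha)$ over all $\alpha\colon \scope(r) \to \{0,1\}$, i.e.\ over all assignments of $(D,\Psi)$, so $Z_G = \psi_{\nt{S}}() = \sum_{\asst} q(r,\asst) = Z$, the sum--product of $(D,\Psi)$.

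For the second claim, let $G_\asst$ be the copy of $G$ in which each $X_v$ is renamed to a fresh $X_v'$, all factors are removed (including those on $\mathsf{case}$ nodes), and in the start rule we add, for each $y \in \scope(r)$, a unary pinning factor $(y = \asst(y))$. Each rule of $G_\asst$ has the same node set, external nodes, and nonterminal-edge attachments as the corresponding rule of $G$, hence is conjoinable with it, so $G \land G_\asst$ is just $G$ with each $X_v$ relabelled $\langle X_v, X_v'\rangle$ and with the pinning factors inserted into $R_0$. The induction above, rerun on $G \land G_\asst$, still gives $\psi_{\langle X_v, X_v'\rangle}(\alpha) = q(v,\asst)$, because below the start rule $G_\asst$ contributes no factors. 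The pins make the start rule's $\tau$ keep only the term with $\alpha = \asst|_{\scope(r)}$, so the sum--product of $G \land G_\asst$ is $q(r,\asst) = q(\asst)$, as required.

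The main obstacle is the $\mathsf{case}$ case, for two reasons. First, unlike a $\mathsf{factor}$ node, the children of $\mathsf{case}(x)$ can have overlapping scopes, so taking $\type(X_v) = \scope(v)$ forces ``irrelevant'' variables to appear as dangling external nodes in each branch rule; one must check (as above) that these leave every $\tau_R$ unchanged and do not interfere with the pinning FGG. Second, $D$ is a DAG, not a tree, so I must verify that along any derivation each $\mathsf{case}(x)$ node -- hence the single threaded node for $x$ -- is reached by at most one path, so that $x$ carries exactly one ($\asst(x)$-selecting) factor. This holds because below a $\mathsf{factor}$ node the two branch scopes are disjoint, so two distinct root-to-$\mathsf{case}(x)$ paths cannot diverge at a $\mathsf{factor}$ node, and they cannot diverge at a $\mathsf{case}$ node either, since a derivation expands a $\mathsf{case}$ node with only one of its two rules. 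When $\mathsf{case}(x)$ is not reached in a given derivation, node $x$ is a dangling node summed out in $R_0$, contributing a factor of $2$ -- consistently with $q$ not depending on $\asst(x)$ in that case, and with the pin in $G_\asst$ removing that factor.
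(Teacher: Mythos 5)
Your construction takes a genuinely different route from the paper's: you thread each node's scope through the grammar as external nodes, create every variable of $\mathit{scope}(r)$ once in the start rule, and let the $\mathsf{case}(x)$ rules attach their selector factors to the single threaded copy of $x$. The paper instead gives every nonterminal $\nt{D}_v$ type $\epsilon$ and generates the variable $x$ \emph{locally}, inside the right-hand sides for $\mathsf{case}(x)$; a variable that is never cased on in a given derivation simply does not occur in the derived graph. This difference is not cosmetic, and it is exactly where your proof of the first claim breaks.

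The gap: your own induction gives $Z_G = \tau_{R_0}() = \sum_{\alpha} q(r,\alpha)$, summed over \emph{all} $2^{|\mathit{scope}(r)|}$ assignments, whereas the CFD's sum--product is $Z = Z(r)$ defined by the recursion $Z(\mathsf{case}(x)) = e^{-\Psi(x)}Z(v_1)+Z(v_2)$, which counts each branch once regardless of how many variables that branch leaves unread. These two quantities differ by exactly the factors of $2$ that you dismiss at the end as harmless. Concretely, take $r=\mathsf{case}(x)$ with children $v_1 = \mathsf{case}(y)$ (both of whose children are $\mathsf{unit}$) and $v_2=\mathsf{unit}$. Then $Z(r) = e^{-\Psi(x)}\bigl(e^{-\Psi(y)}+1\bigr)+1$, but $\sum_{\alpha} q(r,\alpha) = e^{-\Psi(x)}\bigl(e^{-\Psi(y)}+1\bigr) + 2$, because $y$ is summed freely in the $x=0$ branch. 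So your $G$ does not have the same sum--product as $(D,\Psi)$. (Your second claim survives, since the pinning factors in $G_\asst$ eliminate precisely these free sums, giving $q(\asst)$.) The fix is essentially the paper's construction: drop the external nodes entirely and introduce the node for $x$ only inside the rules for $\mathsf{case}(x)$, so that variables unread in a derivation never enter the derived graph and contribute no spurious factor of $2$; the syntactic conditions on CFDs (which you already verified in your discussion of the DAG case) are what guarantee that the locally generated copies of each variable are consistent.
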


\begin{proof}
Given a CFD, we can construct a FGG where each node $v$ of the CFD becomes a different nonterminal symbol $\nt{D}_v$:
\[
\setlength{\arraycolsep}{2pt}
\begin{array}{@{}rcl@{\qquad\qquad}rcl@{\qquad\qquad}rcl@{}}
\multicolumn{3}{c}{\text{node}} & \multicolumn{3}{c}{G} & \multicolumn{3}{c}{G_\asst} \\[1ex]
\hline
\\[1ex]
v &=& \mathsf{case}(x) & 
\begin{tikzpicture}
\node[fac] {$\nt{D}_v$};
\end{tikzpicture}
&\longrightarrow&
\begin{tikzpicture}
\node[var](x) at (0,0) {$x$};
\node[fac,label=below:{$x=1$}] at (0,-1) {} edge (x);
\node[fac] at (2,0) {$\nt{D}_{v_1}$};
\node[fac,label=below:{$e^{-\Psi(x)}$}] at (1,0) {};
\end{tikzpicture}
&
\begin{tikzpicture}
\node[fac] {$\nt{D}_v$};
\end{tikzpicture}
&\longrightarrow&
\begin{tikzpicture}
\node[var](x) at (0,0) {$x$};
\node[fac,label=below:{$x=\asst(x)$}] at (0,-1) {} edge (x);
\node[fac] at (1,0) {$\nt{D}_{v_1}$};
\end{tikzpicture}
\\
\\[1ex]
&&
&\begin{tikzpicture}
\node[fac] {$\nt{D}_v$};
\end{tikzpicture}
&\longrightarrow&
\begin{tikzpicture}
\node[var](x) at (0,0) {$x$};
\node[fac,label=below:{$x=0$}] at (0,-1) {} edge (x);
\node[fac] at (2,0) {$\nt{D}_{v_2}$};
\end{tikzpicture}
&
\begin{tikzpicture}
\node[fac] {$\nt{D}_v$};
\end{tikzpicture}
&\longrightarrow&
\begin{tikzpicture}
\node[var](x) at (0,0) {$x$};
\node[fac,label=below:{$x=\asst(x)$}] at (0,-1) {} edge (x);
\node[fac] at (1,0) {$\nt{D}_{v_2}$};
\end{tikzpicture}
\\
\\[1ex]
v &=& \mathsf{factor}
&
\begin{tikzpicture}
\node[fac] {$\nt{D}$};
\end{tikzpicture}
&\longrightarrow&
\begin{tikzpicture}
\node[fac] at (0,0) {$\nt{D}_{v_1}$};
\node[fac] at (1,0) {$\nt{D}_{v_2}$};
\end{tikzpicture}
&
\begin{tikzpicture}
\node[fac] {$D$};
\end{tikzpicture}
&\longrightarrow&
\begin{tikzpicture}
\node[fac] at (0,0) {$\nt{D}_{v_1}$};
\node[fac] at (1,0) {$\nt{D}_{v_2}$};
\end{tikzpicture}
\\
\\[1ex]
v &=& \mathsf{unit} 
& 
\begin{tikzpicture}
\node[fac] {$\mathsf{unit}$};
\end{tikzpicture}
&\longrightarrow&
\emptyset
& 
\begin{tikzpicture}
\node[fac] {$\mathsf{unit}$};
\end{tikzpicture}
&\longrightarrow&
\emptyset
\end{array}\]
We do not create any rule with left-hand side $\mathsf{empty}$, so that any derivations that generate $\mathsf{empty}$ fail.
\end{proof}
The number of rules in $G$ is the number of nodes in $D$. Computing its sum--product is linear in the number of rules, just as computing the sum--product of $D$ is linear in the number of nodes.

\begin{definition}
A valid \emph{sum--product network} (SPN) is a rooted DAG whose nodes are each either:
\begin{compactitem}
\item $\mathsf{sum}(\lambda_1, \lambda_2)$ with two children $v_1$ and $v_2$, where $\scope(v_1) = \scope(v_2)$.
\item $\mathsf{product}$ with two children $v_1$ and $v_2$ such that no variable appears in one and negated in the other.
\item $x$ or $\bar{x}$ with no children.
\end{compactitem}
A valid SPN defines a distribution over assignments to its variables. For each node $v$, let $v_1, v_2$ be the children of $v$, if any.
\begin{align*}
v &= \mathsf{sum(\lambda_1, \lambda_2)} & q(v, \asst) &= \lambda_1 q(v_1, \asst) + \lambda_2 q(v_2, \asst) \\
v &= \mathsf{product} & q(v, \asst) &= q(v_1, \asst) \, q(v_2, \asst) \\
v &= x & q(v, \asst) &= \asst(x) \\
v &= \bar{x} & q(v, \asst) &= 1-\asst(x)
\end{align*}
\end{definition}

Converting a valid SPN to a FGG is straightforward, but the resulting FGG has a separate node for each occurrence of a variable $x$. The syntactic constraints in the definition of valid SPN ensure that in any graph with nonzero weight, all occurrences of $x$ have the same value.

\begin{proposition}
Any valid SPN $S$ can be converted into a FGG $G$ such that $S$ and $G$ have the same sum--product, and for any assignment $\asst$ of $S$, there is a FGG $G_\asst$ such that the sum--product of $G \land G_\asst$ equals $q(\asst)$. \end{proposition}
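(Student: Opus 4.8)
The plan is to mirror the CFD construction. Give each node $v$ of the SPN its own nonterminal $\nt{D}_v$, all of type $\epsilon$, with $\nt{D}_r$ the start symbol ($r$ the root). In $G$: a $\mathsf{product}$ node with children $v_1,v_2$ becomes one rule $\nt{D}_v\to R$ whose right-hand side is just two nonterminal edges labeled $\nt{D}_{v_1}$ and $\nt{D}_{v_2}$; a leaf labeled $x$ (resp.\ $\bar{x}$) becomes a rule whose right-hand side is one fresh Boolean variable $\rv{X}_v$ with the unary factor $(\rv{X}_v=\text{true})$ (resp.\ $(\rv{X}_v=\text{false})$); a $\mathsf{sum}(\lambda_1,\lambda_2)$ node with children $v_1,v_2$ becomes two rules, one for each $a\in\{1,2\}$, whose right-hand side has a fresh ``selector'' variable $\rv{S}_v$ with domain $\{1,2\}$, the factor $(\rv{S}_v=a)$, a constant factor of value $\lambda_a$, and one nonterminal edge labeled $\nt{D}_{v_a}$. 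The query grammar $G_\asst$ uses the same nonterminals and the same right-hand side shapes, but drops the $\lambda_a$ factors and replaces every leaf factor by $(\rv{X}_v=\asst(x))$, where $x$ is the variable of that leaf; the selector $\rv{S}_v$ is retained so that $G$'s $a$-th sum rule conjoins with $G_\asst$'s $b$-th sum rule only up to the factor $(\rv{S}_v=a)(\rv{S}_v=b)$, which vanishes identically unless $a=b$.

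First I would compute $Z_G$. A structural induction over the DAG shows that the sum--product of the $\nt{D}_v$-type derivations of $G$ equals $S(v)$, where $S(\text{leaf})=1$, $S(\mathsf{product})=S(v_1)S(v_2)$, and $S(\mathsf{sum}(\lambda_1,\lambda_2))=\lambda_1 S(v_1)+\lambda_2 S(v_2)$: the leaf and sum steps use $\sum_b(\rv{X}_v=b)=1$ and $\sum_b(\rv{S}_v=b)=1$, and the product step uses that the two subderived graphs share no nodes (all nonterminals have type $\epsilon$, so rewriting identifies nothing), so the marginal sum factorizes. Hence $Z_G=S(r)$. It then remains to prove the lemma that for a \emph{valid} SPN one has $S(v)=\sum_{\asst\colon\scope(v)\to\{0,1\}}q(v,\asst)$ at every node; taking $v=r$ gives $Z_G=\sum_\asst q(\asst)$, the sum--product of $S$.

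Next I would compute $Z_{G\conj G_\asst}$. A structural induction shows that the sum--product of the $\langle\nt{D}_v,\nt{D}_v\rangle$-type derivations of $G\conj G_\asst$ equals $q(v,\asst)$. At a leaf, the conjoined rule carries $(\rv{X}_v=\text{true})$ (resp.\ $(\rv{X}_v=\text{false})$) from $G$ and $(\rv{X}_v=\asst(x))$ from $G_\asst$, so summing out $\rv{X}_v$ gives $\asst(x)$ (resp.\ $1-\asst(x)$), which is $q(v,\asst)$. At a sum node, conjoining $G$'s $a$-th rule with $G_\asst$'s $b$-th rule carries both $(\rv{S}_v=a)$ and $(\rv{S}_v=b)$, so only the in-phase rules contribute anything and we get $\lambda_1 q(v_1,\asst)+\lambda_2 q(v_2,\asst)=q(v,\asst)$. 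At a product node, node-disjointness again gives $q(v_1,\asst)q(v_2,\asst)=q(v,\asst)$. The conjunction also creates off-diagonal nonterminals $\langle\nt{D}_v,\nt{D}_w\rangle$ with $v\ne w$, but any derivation from $\langle\nt{D}_r,\nt{D}_r\rangle$ that leaves the diagonal must use an out-of-phase sum rule and therefore has weight zero; so these are harmless and $Z_{G\conj G_\asst}=q(r,\asst)=q(\asst)$.

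The main obstacle is the lemma invoked in the first step: that for a valid SPN the backward pass $S$ computes the partition function $\sum_\asst q(\asst)$. Its sum case uses completeness ($\scope(v_1)=\scope(v_2)=\scope(v)$) and is immediate; the content is the product case, $\sum_\asst q(v_1,\asst)\,q(v_2,\asst)=S(v_1)S(v_2)$ even when $\scope(v_1)\cap\scope(v_2)\ne\emptyset$. I would first prove, by induction and using consistency, that if a variable $x$ occurs only positively in the subnetwork below a node $u$ then $q(u,\asst)=0$ whenever $\asst(x)=\text{false}$ (and symmetrically for a variable occurring only negatively). Consistency forces each variable shared by the two children of a product node to occur with a single fixed polarity in both children; hence in $\sum_\asst q(v_1,\asst)q(v_2,\asst)$ the inner sum over assignments to the shared variables collapses to the one ``correct'' assignment, after which the two factors separate. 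The remaining pieces — the two structural inductions and the bookkeeping for the off-diagonal nonterminals — are mechanical.
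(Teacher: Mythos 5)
Your construction is essentially the paper's: one nonterminal $\nt{D}_v$ per SPN node, a single rule with two nonterminal edges for $\mathsf{product}$, two rules (one per child, carrying $\lambda_a$) for $\mathsf{sum}$, and a fresh Boolean variable with an indicator factor for each leaf occurrence. But you add two things the paper's proof leaves implicit, and both are the right things to add. First, you correctly identify that $Z_G$ as built equals the ``all-indicators-one'' evaluation of the network, so equating it with $\sum_\asst q(\asst)$ requires the validity lemma $S(v)=\sum_\asst q(v,\asst)$; your proof of its product case (consistency forces each shared variable to occur with one polarity in both children, the single-polarity property propagates by induction, so the sum over shared variables collapses and the factors separate) is correct, and this is exactly the content hidden behind the paper's remark that ``the syntactic constraints\ldots ensure that in any graph with nonzero weight, all occurrences of $x$ have the same value.'' Second, your selector variable $\rv{S}_v$ at sum nodes addresses a real subtlety in the conjunction: since conjoinability only requires the nonterminal edge labels to have equal \emph{types} (here all $\epsilon$), $G$'s rule $\nt{D}_v\rightarrow\lambda_1\,\nt{D}_{v_1}$ conjoins with $G_\asst$'s rule $\nt{D}_v\rightarrow\nt{D}_{v_2}$, producing off-diagonal nonterminals $\langle\nt{D}_{v_1},\nt{D}_{v_2}\rangle$ whose derivations do not obviously have weight zero in the paper's version (e.g.\ when $v_1,v_2$ are the leaves $x$ and $\bar x$, the cross term contributes). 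Your $[\rv{S}_v{=}a][\rv{S}_v{=}b]$ factor kills these explicitly; the paper instead relies on reading corresponding rules of $G$ and $G_\asst$ as sharing literal edge identities so that only aligned pairs conjoin. The one loose end in your write-up is the product case of the off-diagonal argument: under the type-only reading of conjoinability, a product rule of $G$ could also conjoin with a product rule of $G_\asst$ under the \emph{crossed} identification of the two nonterminal edges, which leaves the diagonal without passing through an out-of-phase sum rule; this is excluded under the same shared-edge-identity convention you are implicitly using elsewhere, but it deserves a sentence. Overall the proposal is correct and is a more careful, self-contained version of the paper's argument.
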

\begin{proof}
We construct a FGG where each node $v$ becomes a different nonterminal symbol $\nt{D}_v$:
\[
\setlength{\arraycolsep}{2pt}
\begin{array}{@{}rcl@{\qquad\qquad}rcl@{\qquad\qquad}rcl@{}}
\multicolumn{3}{c}{\text{node}} & \multicolumn{3}{c}{G} & \multicolumn{3}{c}{G_\asst} \\[1ex]
\hline
\\[1ex]
v &=& x & 
\begin{tikzpicture}
\node[fac] at (0,0) {$\nt{D}_v$}; 
\end{tikzpicture}
&\longrightarrow&
\begin{tikzpicture}
\node[var] (x) {$x$};
\node[fac,label=below:{$x=1$}] at (1,0) {} edge (x);
\end{tikzpicture}
&
\begin{tikzpicture}
\node[fac] at (0,0) {$\nt{D}_v$}; 
\end{tikzpicture}
&\longrightarrow&
\begin{tikzpicture}
\node[var] (x) {$x$};
\node[fac,label=below:{$x=\asst(x)$}] at (1,0) {} edge (x);
\end{tikzpicture}
\\
\\[1ex]
v &=& \bar{x} &
\begin{tikzpicture}
\node[fac] at (0,0) {$\nt{D}_v$}; 
\end{tikzpicture}
&\longrightarrow&
\begin{tikzpicture}
\node[var] (x) {$x$};
\node[fac,label=below:{$x=0$}] at (1,0) {} edge (x);
\end{tikzpicture}
&
\begin{tikzpicture}
\node[fac] at (0,0) {$\nt{D}_v$}; 
\end{tikzpicture}
&\longrightarrow&
\begin{tikzpicture}
\node[var] (x) {$x$};
\node[fac,label=below:{$x=\asst(x)$}] at (1,0) {} edge (x);
\end{tikzpicture}
\\
\\[1ex]
v &=& \mathsf{sum}(\lambda_1, \lambda_2) &
\begin{tikzpicture}
\node[fac] {$\nt{D}_v$};
\end{tikzpicture}
&\longrightarrow&
\begin{tikzpicture}
\node[fac,label=below:{$\lambda_1$}] {};
\node[fac] at (1,0) {$\nt{D}_{v_1}$};
\end{tikzpicture}
&
\begin{tikzpicture}
\node[fac] {$\nt{D}_v$};
\end{tikzpicture}
&\longrightarrow&
\begin{tikzpicture}
\node[fac] at (1,0) {$\nt{D}_{v_1}$};
\end{tikzpicture}
\\
&&&
\begin{tikzpicture}
\node[fac] {$\nt{D}_v$};
\end{tikzpicture}
&\longrightarrow&
\begin{tikzpicture}
\node[fac,label=below:{$\lambda_2$}] {};
\node[fac] at (1,0) {$\nt{D}_{v_2}$};
\end{tikzpicture}
&\begin{tikzpicture}
\node[fac] {$\nt{D}_v$};
\end{tikzpicture}
&\longrightarrow&
\begin{tikzpicture}
\node[fac] at (1,0) {$\nt{D}_{v_2}$};
\end{tikzpicture}
\\
\\[1ex]
v &=& \mathsf{product} &
\begin{tikzpicture}
\node[fac] {$\nt{D}_v$};
\end{tikzpicture}
&\longrightarrow&
\begin{tikzpicture}
\node[fac] at (0,0) {$\nt{D}_{v_1}$};
\node[fac] at (1,0) {$\nt{D}_{v_2}$};
\end{tikzpicture}
&
\begin{tikzpicture}
\node[fac] {$\nt{D}_v$};
\end{tikzpicture}
&\longrightarrow&
\begin{tikzpicture}
\node[fac] at (0,0) {$\nt{D}_{v_1}$};
\node[fac] at (1,0) {$\nt{D}_{v_2}$};
\end{tikzpicture}
\end{array} \]
\end{proof}
The number of rules in $G$ is the number of nodes in $S$. Computing its sum--product is linear in the number of rules, just as computing the sum--product of $S$ is linear in the number of nodes.

Further variations of SPNs have been proposed, in particular to generate repeated substructures \citep{stuhlmuller+goodman:2012,melibari+:2016}. Factored SPNs \citep{stuhlmuller+goodman:2012} are especially closely related to FGGs, in that they allow one part of a SPN to ``reference'' another, which is analogous to a nonterminal-labeled edge in a FGG.

CFDs and SPNs present a rather different, lower-level view of a model than the other formalisms surveyed here do. Whereas factor graphs and the other formalisms represent the model's \emph{variables} and the \emph{dependencies} among them, CFDs and SPNs (including factored SPNs) represent the \emph{computation} of the sum-product. For instance, converting a factor graph $H$ to a CFD or SPN requires forming a tree decomposition of $H$ \citep{mcallester+:2008}, and the resulting CFD/SPN's structure is that of the tree decomposition, not of $H$.

FGGs, in a sense, combine both points of view. Their derived graphs represent a model's variables and dependencies, while their derivation trees represent the computation of the sum-product. Thus, a factor graph $H$ can be trivially converted into a FGG $S \rightarrow H$, and, as can be seen in the translations given above, a CFD or SPN can also be converted to a FGG while preserving its structure.

\section{Proof of Proposition~\ref{lem:factorize}}
\label{app:factorize}

Let $H = (V, E)$ be a hypergraph. Recall that a \emph{tree decomposition} of $H$ is a tree whose nodes are called \emph{bags}, to each of which is associated a set of nodes, $V_B \subseteq V$, and (nonstandardly) a set of edges, $E_B \subseteq E$. The bags must satisfy the properties:
\begin{compactitem}
\item Node cover: $\bigcup_B V_B = V$.
\item Edge cover: for every edge $e \in E$, there is exactly one bag $B$ such that $e \in E_B$ and $\att(e) \subseteq V_B$.
\item Running intersection: if $v \in V_{B_1}$ and $v \in V_{B_2}$, then for every bag $B$ between $B_1$ and $B_2$, $v \in V_B$.
\end{compactitem}
The \emph{width} of a tree decomposition is $\max_B |V_B| - 1$, and the \emph{treewidth} of $H$ is the minimum width of any tree decomposition of $H$. 
A tree decomposition can always be made to have at most $n$ nodes without changing its width \citep{bodlaender:1996}.

\citet{chiang+:acl2013} give a parsing algorithm for HRGs that matches right-hand sides incrementally using their tree decompositions. They observe that this is related to the concept of binarization of context-free grammars. Here, we make this connection explicit by showing how to factorize a HRG.

For every rule $(X \rightarrow R)$, where $\bar{R}$ has $n_R$ nodes and treewidth at most $k$, form a tree decomposition of $\bar{R}$ with $n_R - k \leq n_R$ bags. Let the root of the tree decomposition be the bag containing all the external nodes of $R$. For each bag $B$, construct a rule $X_B \rightarrow R_B$ as follows.
\begin{compactitem}
\item If $B$ is the root bag, $X_B = X$; otherwise, $X_B$ is a fresh nonterminal symbol.
\item Add all nodes in $V_B$ and edges in $E_B$ to $R_B$.
\item If $B$ is the root bag, $R_B$'s external nodes are the same as $R$'s; if $B$ has parent $P$, let $R_B$'s external nodes be $V_P \cap V_B$.
\item For each child bag $B_i$, add a hyperedge with label $X_{B_i}$ and endpoints $V_B \cap V_{B_i}$. 
\end{compactitem}
This new FGG generates the same language as $G$. The number of rules is at most $\sum_{(X \rightarrow R) \in G} n_R = n_G$. Every right-hand side has at most $(k+1)$ nodes. 

\section{Supplement to Theorem~\ref{thm:inference_graph}}
\label{app:finitegrammar}

\subsection{An example}

\begin{example}
\label{eg:finitegrammar}
We show how to construct the factor graph corresponding to the following simple, nonreentrant FGG:
\begin{center}
$\begin{aligned}
    \begin{tikzpicture}[x=1.2cm,y=0.8cm] 
        \node[fac] at (0,0) { $\nt{S}$ }; 
    \end{tikzpicture} 
    &\xlongrightarrow{\pi_1}
    \begin{tikzpicture}[x=1.2cm,y=0.8cm] 
        \node[var] (a1) at (-1,1) {$\rv{A}_1$};
        \node[var] (b2) at (1,1) {$\rv{B}_2$};
        \node[var] (a4) at (0,-1.5) {$\rv{A}_4$};
        \node[fac] (x) at (0,0) {$\nt{X}_3$} edge (a1) edge (b2) edge (a4);
    \end{tikzpicture} 
\\
    \begin{tikzpicture}[x=1.2cm,y=0.8cm] 
        \node[fac] at (0,0) { $\nt{S}$ }; 
    \end{tikzpicture} 
    &\xlongrightarrow{\pi_2}
    \begin{tikzpicture}[x=1.2cm,y=0.8cm] 
        \node[var] (a1) at (-1,0) {$\rv{A}_1$};
        \node[var] (b2) at (1,0) {$\rv{B}_2$};
        \node[fac] (y) at (0,0) {$\nt{Y}_3$} edge (a1) edge (b2);
    \end{tikzpicture} \\
    \\[1ex]
    \begin{tikzpicture}[x=1.2cm,y=0.8cm] 
        \node[ext] (a1) at (-1,1) {$\rv{A}_1$};
        \node[ext] (b2) at (1,1) {$\rv{B}_2$};
        \node[ext] (a4) at (0,-1.5) {$\rv{A}_4$};
        \node[fac] (x) at (0,0) {$\nt{X}$} edge (a1) edge (b2) edge (a4);
    \end{tikzpicture}
    &\xlongrightarrow{\pi_3}
    \begin{tikzpicture}[x=1.2cm,y=0.8cm] 
        \node[ext] (a1) at (-1,1) {$\rv{A}_1$};
        \node[ext] (b2) at (1,1) {$\rv{B}_2$};
        \node[ext] (a4) at (0,-1.5) {$\rv{A}_4$};
        \node[fac,label=left:{$f(\rv{A}_1,\rv{A}_4)$}] at (-0.5,-0.25) {} edge (a1) edge (a4);
        \node[fac] (y) at (0.5,-0.25) {$\nt{Y}_3$} edge (a4) edge (b2);
    \end{tikzpicture} \\
    \\[1ex]
    \begin{tikzpicture}[x=1.2cm,y=0.8cm] 
        \node[ext] (a) at (-1,0) {$\rv{A}_1$};
        \node[fac] (y) at (0,0) {$\nt{Y}$} edge (a);
        \node[ext] (b) at (1,0) {$\rv{B}_2$} edge (y);
    \end{tikzpicture}
    &\xlongrightarrow{\pi_4} 
    \begin{tikzpicture}[x=1.2cm,y=0.8cm] 
        \node[ext] (a) at (-1,0) {$\rv{A}_1$};
        \node[ext] (b) at (1,0) {$\rv{B}_2$};
        \node[fac,label=above:{$g(\rv{A}_1,\rv{B}_2)$}] at (0,0) {} edge (a) edge (b);
    \end{tikzpicture}
\end{aligned}$
\end{center}

This grammar generates just two graphs:
\begin{center}
$\begin{aligned}
    \begin{tikzpicture}[x=1.2cm,y=0.8cm]
        \node[var] (a) at (0,0) {$\rv{A}_1$};
        \node[var] (b) at (2,0) {$\rv{B}_2$};
        \node[fac,label=above:{$g(\rv{A}_1,\rv{B}_2)$}] at (1,0) {} edge (a) edge (b);
    \end{tikzpicture} \hspace{1cm}
    &
    \begin{tikzpicture}[x=1.2cm,y=0.8cm] 
        \node[var] (a1) at (0,1) {$\rv{A}_1$};
        \node[var] (a4) at (1,-1.5) {$\rv{A}_4$};
        \node[var] (b2) at (2,1) {$\rv{B}_2$};
        \node[fac,label=left:{$f(\rv{A}_1,\rv{A}_4)$}] at (0.5,-0.25) {} edge (a1) edge (a4);
        \node[fac,label=right:{$g(\rv{A}_4,\rv{B}_2)$}] at (1.5,-0.25) {} edge (a4) edge (b2);
    \end{tikzpicture}
\end{aligned}$
\end{center}
\end{example}

Applying the construction from Theorem~\ref{thm:inference_graph} gives the factor graph shown in Figure~\ref{fig:finitegrammar}. 
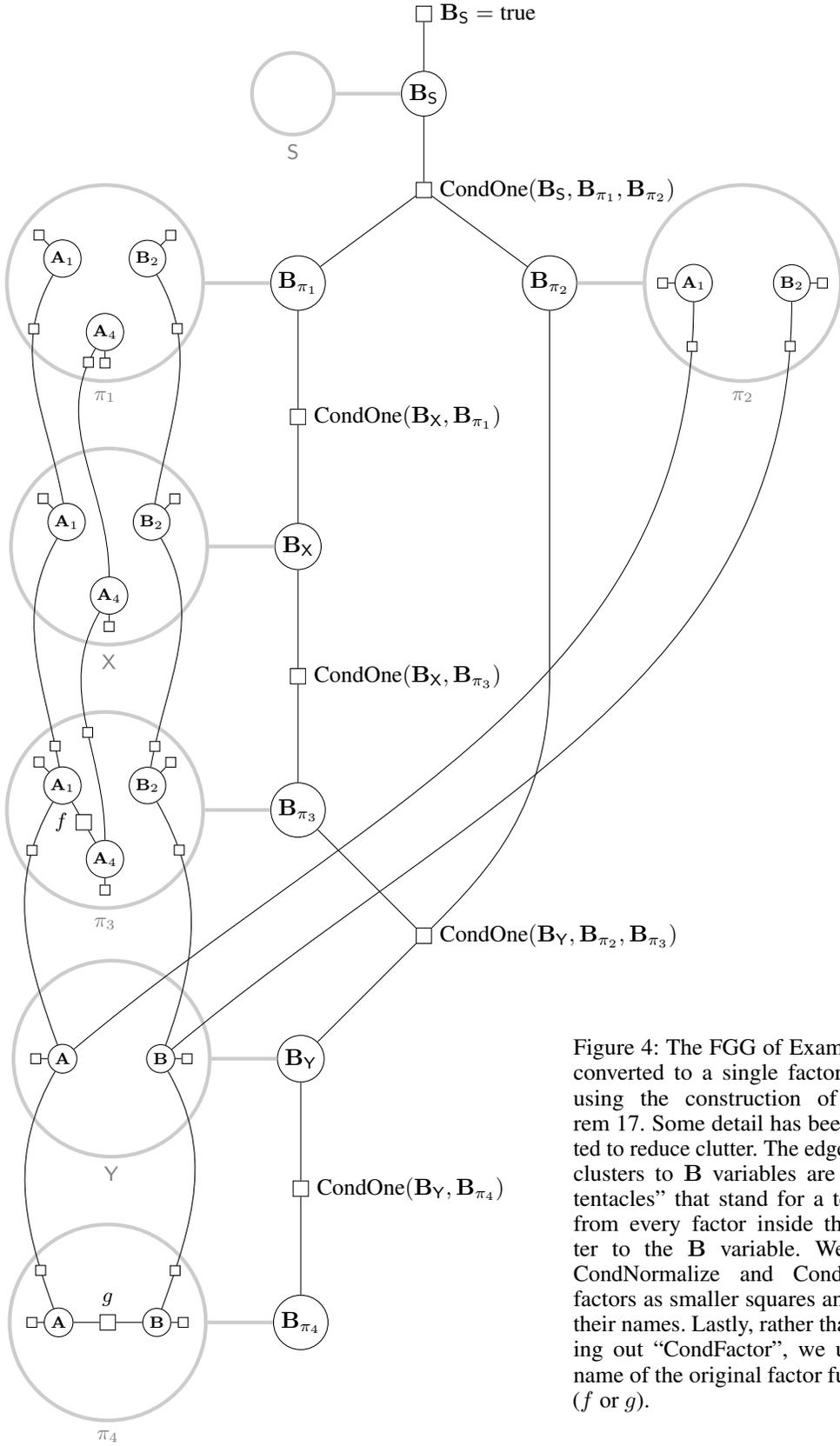
\begin{figure*}
\usetikzlibrary{calc,positioning,backgrounds}
\begin{center}
\begin{tikzpicture}
    \tikzset{node distance=1.5cm}
    \begin{scope}[every label/.style={fill=white}]
    \node[fac,label=right:{$\rv{B}_{\nt{S}} = \text{true}$}] (root) {};
    \node[var] (b_s)   [below=0.75cm of root]       {$\rv{B}_{\nt{S}}$}
            edge (root);
    \node[fac,label=right:{$\text{CondOne}(\rv{B}_{\nt{S}},\rv{B}_{\pi_1},\rv{B}_{\pi_2})$}] (xor1)  [below=1cm of b_s] {}
            edge (b_s);
    \node[var] (b_pi1) [below left=1cm and 1.5cm of xor1]         {$\rv{B}_{\pi_1}$}
            edge (xor1);
    \node[var] (b_pi2) [below right=1cm and 1.5cm of xor1]        {$\rv{B}_{\pi_2}$}
            edge (xor1);
    \node[fac,label=right:{$\text{CondOne}(\rv{B}_{\nt{X}},\rv{B}_{\pi_1})$}] (xor2) [below=of b_pi1] {}
            edge (b_pi1);
    \node[var] (b_x)   [below=of xor2]              {$\rv{B}_{\nt{X}}$}
            edge (xor2);
    \node[fac,label=right:{$\text{CondOne}(\rv{B}_{\nt{X}},\rv{B}_{\pi_3})$}] (xor3) [below=of b_x] {}
            edge (b_x);
    \node[var] (b_pi3) [below=of xor3]              {$\rv{B}_{\pi_3}$}
            edge (xor3);
    \node[fac,label=right:{$\text{CondOne}(\rv{B}_{\nt{Y}},\rv{B}_{\pi_2},\rv{B}_{\pi_3})$}] (xor4)  [below right=1.5cm and 1.5cm of b_pi3] {}
            edge (b_pi3);
    \node[var] (b_y)   [below left=1.5cm and 1.5cm of xor4]    {$\rv{B}_{\nt{Y}}$}
            edge (xor4);
    \node[fac,label=right:{$\text{CondOne}(\rv{B}_{\nt{Y}},\rv{B}_{\pi_4})$}] (xor5)  [below=of b_y] {}
            edge (b_y);
    \node[var] (b_pi4) [below=of xor5]              {$\rv{B}_{\pi_4}$}
            edge (xor5);
    \end{scope}
    
    \coordinate (p) at ([yshift=-6cm]b_pi2);
    \draw (b_pi2) -- (p) .. controls +(0,-2) and +(1,1) .. (xor4);

    \tikzset{node distance=1cm}
    \tikzset{cluster/.style={draw,ultra thick,black!20,circle,inner sep=0cm,minimum size=3cm}}
    \node[cluster] (c_s)    [minimum size=1.25cm,left=of b_s] [label={[black!50]below:$\nt{S}$}] {};
    \node[cluster] (c_pi1)  [left=of b_pi1] [label={[black!50]below:$\pi_1$}] {};
    \node[cluster] (c_pi2)  [right=of b_pi2] [label={[black!50]below:$\pi_2$}] {};
    \node[cluster] (c_x)    [left=of b_x] [label={[black!50]below:$\nt{X}$}] {};
    \node[cluster] (c_pi3)  [left=of b_pi3] [label={[black!50]below:$\pi_3$}] {};
    \node[cluster] (c_y)    [left=of b_y] [label={[black!50]below:$\nt{Y}$}] {};
    \node[cluster] (c_pi4)  [left=of b_pi4] [label={[black!50]below:$\pi_4$}] {};
    
    \tikzset{smallfac/.style={draw=black,solid,fill=white,rectangle,inner sep=0cm,minimum size=0.15cm}}
    \tikzset{every node/.style={font=\scriptsize}}
    \path (c_pi1.center) node[var] (pi1_a1) at +(150:0.75cm) {$\rv{A}_1$};
    \path (c_pi1.center) node[var] (pi1_b2) at +(30:0.75cm) {$\rv{B}_2$};
    \path (c_pi1.center) node[var] (pi1_a4) at +(270:0.75cm) {$\rv{A}_4$};
    \node[smallfac,above left=0.1cm of pi1_a1] {} edge (pi1_a1);
    \node[smallfac,above right=0.1cm of pi1_b2] {} edge (pi1_b2);
    \node[smallfac,below=0.1cm of pi1_a4] {} edge (pi1_a4);
    \path (c_pi2.center) node[var] (pi2_a1) at +(180:0.75cm) {$\rv{A}_1$};
    \path (c_pi2.center) node[var] (pi2_b2) at +(0:0.75cm) {$\rv{B}_2$};
    \node[smallfac,left=0.1cm of pi2_a1] {} edge (pi2_a1);
    \node[smallfac,right=0.1cm of pi2_b2] {} edge (pi2_b2);
    \path (c_x.center) node[var] (x_a1) at +(150:0.75cm) {$\rv{A}_1$};
    \path (c_x.center) node[var] (x_b2) at +(30:0.75cm) {$\rv{B}_2$};
    \path (c_x.center) node[var] (x_a4) at +(270:0.75cm) {$\rv{A}_4$};
    \node[smallfac,above left=0.1cm of x_a1] {} edge (x_a1);
    \node[smallfac,above right=0.1cm of x_b2] {} edge (x_b2);
    \node[smallfac,below=0.1cm of x_a4] {} edge (x_a4);
    \path (c_pi3.center) node[var] (pi3_a1) at +(150:0.75cm) {$\rv{A}_1$};
    \path (c_pi3.center) node[var] (pi3_b2) at +(30:0.75cm) {$\rv{B}_2$};
    \path (c_pi3.center) node[var] (pi3_a4) at +(270:0.75cm) {$\rv{A}_4$};
    \node[smallfac,above left=0.1cm of pi3_a1] {} edge (pi3_a1);
    \node[smallfac,above right=0.1cm of pi3_b2] {} edge (pi3_b2);
    \node[smallfac,below=0.1cm of pi3_a4] {} edge (pi3_a4);
    \node[fac,label=left:{$f$}] at ($(pi3_a1)!0.5!(pi3_a4)$) {}
            edge (pi3_a1)
            edge (pi3_a4);
    \path (c_y.center) node[var] (y_a) at +(180:0.75cm) {$\rv{A}$};
    \path (c_y.center) node[var] (y_b) at +(0:0.75cm) {$\rv{B}$};
    \node[smallfac,left=0.1cm of y_a] {} edge (y_a);
    \node[smallfac,right=0.1cm of y_b] {} edge (y_b);
    \path (c_pi4.center) node[var] (pi4_a) at +(180:0.75cm) {$\rv{A}$};
    \path (c_pi4.center) node[var] (pi4_b) at +(0:0.75cm) {$\rv{B}$};
    \node[smallfac,left=0.1cm of pi4_a] {} edge (pi4_a);
    \node[smallfac,right=0.1cm of pi4_b] {} edge (pi4_b);
    \node[fac,label=above:{$g$}] at ($(pi4_a)!0.5!(pi4_b)$) {}
            edge (pi4_a)
            edge (pi4_b);
            
    \begin{scope}[every edge/.style={draw=black}]
    \path (pi1_a1) edge[out=240,in=100] node[smallfac,pos=0.63,pos=0.24] {} (x_a1);
    \path (pi1_a4) edge[out=240,in=90]  node[smallfac,pos=0.36,pos=0.06] {} (x_a4);
    \path (pi1_b2) edge[out=300,in=80]  node[smallfac,pos=0.63,pos=0.24] {} (x_b2);
    \path (x_a1) edge[out=240,in=100] node[smallfac,pos=0.63,pos=0.92] {} (pi3_a1);
    \path (x_a4) edge[out=240,in=90]  node[smallfac,pos=0.36,pos=0.55] {} (pi3_a4);
    \path (x_b2) edge[out=300,in=80]  node[smallfac,pos=0.63,pos=0.92] {} (pi3_b2);
    \path (pi3_a1) edge[out=240,in=110] node[smallfac,pos=0.55,pos=0.2] {} (y_a);
    \path (pi3_b2) edge[out=300,in=70]  node[smallfac,pos=0.55,pos=0.2] {} (y_b);
    \path (y_a) edge[out=240,in=110] node[smallfac,pos=0.55,pos=0.85] {} (pi4_a);
    \path (y_b) edge[out=300,in=70]  node[smallfac,pos=0.55,pos=0.85] {} (pi4_b);
    \path (pi2_a1) edge[out=270,in=30,in=40] node[smallfac,pos=0.19,pos=0.04] {} (y_a);
    \path (pi2_b2) edge[out=280,in=45,out=270,in=40] node[smallfac,pos=0.24,pos=0.04] {} (y_b);
    \end{scope}
    
    \begin{scope}[every edge/.style={draw,solid,ultra thick,black!20}]
    \draw (c_s) edge (b_s);
    \draw (c_x) edge (b_x);
    \draw (c_y) edge (b_y);
    \draw (c_pi1) edge (b_pi1);
    \draw (c_pi2) edge (b_pi2);
    \draw (c_pi3) edge (b_pi3);
    \draw (c_pi4) edge (b_pi4);
    \end{scope}

\node at (1.9in,-7.25in) {\parbox{2in}{\caption{The FGG of Example~\ref{eg:finitegrammar}, converted to a single factor graph using the construction of Theorem~\ref{thm:inference_graph}. Some detail has been omitted to reduce clutter. 
The edges from clusters to $\rv{B}$ variables are ``meta-tentacles'' that stand for a tentacle from every factor inside the cluster to the $\rv{B}$ variable. We draw $\text{CondNormalize}$ and $\text{CondEquals}$ factors as smaller squares and omit their names. 
Lastly, rather than writing out ``$\text{CondFactor}$'', we use the name of the original factor function ($f$ or $g$).}
\label{fig:finitegrammar}
}};

\end{tikzpicture}
\end{center}
\end{figure*}

\subsection{Complexity of inference}

As noted in Section~\ref{sec:inference_graph}, the purpose of this conversion to a single factor graph is to make inference possible with infinite variable domains; after converting to a factor graph, existing, possibly approximate, inference methods can be applied. But with finite variable domains, an algorithm like variable elimination would not be appropriate because this conversion has the potential to increase treewidth dramatically.

In the proof of Theorem~\ref{thm:inference_cfg}, we constructed the \emph{nonterminal graph}, which has a node for every nonterminal and an edge from $X$ to $Y$ iff there is a rule $X\rightarrow R$ where $R$ has an edge labeled $Y$. For a nonreentrant FGG, the nonterminal graph is always a DAG. If, for each $X \in {N \setminus S}$, $X$ appears in the right-hand side of exactly one rule, then the nonterminal graph is a tree.

When the nonterminal graph is a tree, we can construct a tree decomposition by making one bag for each cluster, and one bag for each $\text{CondOne}$ factor. The bag for a cluster $C$ contains all the variables in $C$, along with $\rv{B}_C$ and all the $\text{CondNormalize}$ and $\text{CondFactor}$ edges associated with $C$. The bag for a $\text{CondOne}$ factor will contain all the $\rv{B}$ variables used by that $\text{CondOne}$ factor, all the $\text{CondEquals}$ edges connecting clusters involved in that $\text{CondOne}$ factor, and all the variables connected to those $\text{CondEquals}$ edges.

Exact inference on this tree decomposition is very similar to the algorithm described in Theorem~\ref{thm:inference_cfg}. However, a na\"{i}ve application of variable elimination will still be less efficient than that algorithm, since the $\text{CondOne}$ factors connect $|P^X|+1$ binary variables, requiring a loop over $2^{|P^X|+1}$ assignments. All but $|P^X|+1$ of these assignments have zero weight, so in fact we can process these factors much faster; modifying the variable elimination algorithm to account for this and the $\text{CondEquals}$ constraints would give us something almost identical to the algorithm of Theorem~\ref{thm:inference_cfg}.

In the DAG case, this simple tree decomposition is not possible. The factor graph $H$ has the nonterminal graph as a minor, so the treewidth of the nonterminal graph is a lower bound on the treewidth of $H$ \citep[Lemma 16]{bodlaender:1998}. In the worst case, this could be $|N|$. 

\subsection{Detailed proof of correctness}

If $G$ is a FGG and $H$ is the factor graph that results from the construction of Theorem~\ref{thm:inference_graph}, we can show that they have the same sum--product $Z_G = Z_H$.

The sum--product $Z_H$ can be computed in the usual way, by summing over all assignments to the variables and, for each assignment, taking the product over all of the factors:
\begin{align*}
    Z_H &= \sum_{\asst \in \Asst_H} \prod_{e \in H} F(e)(\asst(e)).
\end{align*}

The summation over assignments $\asst$ includes many possible settings of the $\rv{B}$ variables. But the $\text{CondOne}$ factors tell us that, if the assignment to the $\rv{B}$ variables does not give us a valid derivation, then the weight of that assignment will be 0. Therefore, we only need to sum over assignments to the $\rv{B}$ variables which represent a valid derivation, and so we can express the sum--product using a sum over derivations rather than a sum over assignments to $\rv{B}$ variables. Let $\asst_{\rv{B}}$ represent the assignment to the $\rv{B}$ variables. Then:

\begin{align*}
    Z_H &= \sum_{\deriv \in \mathcal{D}(G)} \sum_{\substack{\asst \in \Asst_H \\ \text{$\asst_{\rv{B}}$ consistent with $\deriv$}}} \prod_{e \in H} F(e)(\asst(e)).
\end{align*}
(Note that the product over $e \in H$ can ignore all $\text{CondOne}$ factors, since when the assignment to the $\rv{B}$ variables is consistent with some derivation, they all have value 1.)

We can associate a derivation $\deriv$ with the subset of clusters in $H$ corresponding to the nonterminals and rules which were used in the derivation; call this $\mathcal{C}_\deriv$. For any $\deriv$, all the variables in $H$ are divided into three parts: those that belong to clusters in $\mathcal{C}_\deriv$ (call this $V_\deriv$), those that belong to clusters not in $\mathcal{C}_\deriv$ (call this $V_{\overline \deriv}$), and the $\rv{B}$ variables (which don't belong to any cluster). 
Let $\asst_{\rv{B},D}$ be the unique assignment to the $\rv{B}$ variables that is consistent with $\deriv$.
Let $\Asst_\deriv$ be the set of all assignments extending $\asst_{\rv{B},D}$ with assignments to $V_\deriv$, and let $\Asst_{\overline\deriv}$ be the set of all assignments extending $\asst_{\rv{B},D}$ with assignments to~$V_{\overline\deriv}$. 

Let $E_\deriv$ be the set of factors involving a variable in $V_\deriv$, and let $E_{\overline\deriv}$ be the set of factors involving a variable in $V_{\overline\deriv}$. 
Because any factors between $V_\deriv$ and $V_{\overline\deriv}$ are $\text{CondEquals}$ factors with value 1 (since their $\rv{B}$ variable is false), we can ignore them. Similarly, the only factors which don't involve either $V_\deriv$ or $V_{\overline\deriv}$ are the $\text{CondOne}$ factors, which we are already ignoring. This allows us to rewrite the sum--product as
\begin{align*}
    Z_H &= \sum_{\deriv \in \mathcal{D}} \Biggl(\underbrace{\sum_{\asst \in \Asst_\deriv} \prod_{e \in E_\deriv} F(e)(\asst(e))}_{Z_\deriv}\Biggr) \Biggl(\underbrace{\sum_{\asst \in \Asst_{\overline\deriv}} \prod_{e \in E_{\overline\deriv}} F(e)(\asst(e))}_{Z_{\overline\deriv}}\Biggr).
\end{align*}

Consider $Z_{\overline\deriv}$ first. All $\text{CondFactor}$ and $\text{CondEquals}$ factors in $E_{\overline\deriv}$ have value 1 and can be ignored, leaving only $\text{CondNormalize}$ factors. Because these place a probability distribution $p_v$ on each variable $v$ in an unused cluster, those variables all sum out:
\begin{align*}
Z_{\overline\deriv} &= \sum_{\asst \in \Asst_{\overline\deriv}} \prod_{C_X \not\in \mathcal{C}_\deriv} \prod_{v \in C_X} \text{CondNormalize}_v(\rv{B}_X, v) \prod_{C_\pi \not\in \mathcal{C}_\deriv} \prod_{v \in C_\pi} \text{CondNormalize}_v(\rv{B}_\pi, v) \\
&= \sum_{\asst \in \Asst_{\overline\deriv}} \prod_{C_X \not\in \mathcal{C}_\deriv} \prod_{v \in C_X} p_v(\xi(v)) \prod_{C_\pi \not\in \mathcal{C}_\deriv} \prod_{v \in C_\pi} p_v(\xi(v)) \\
&= \prod_{C_X \not\in \mathcal{C}_\deriv} \prod_{v \in C_X} \left(\sum_{x \in\domain(v)} p_v(x)\right) \prod_{C_\pi \not\in \mathcal{C}_\deriv} \prod_{v \in C_\pi} \left(\sum_{x \in\domain(v)} p_v(x)\right)  \\
&= 1.
\end{align*}

Now consider $Z_\deriv$. All $\text{CondNormalize}$ factors in $E_\deriv$ have value 1 and can be ignored, leaving only $\text{CondEquals}$ and $\text{CondFactor}$ factors. Let $H_\deriv$ be the derived graph of $\deriv$. We can think of the derivation as merging pairs of nodes in $V_\deriv$, so that a single node $v \in H_\deriv$ may correspond to several ``copies'' in $V_\deriv$. However, the $\text{CondEquals}$ constraints ensure that all copies of $v$ have the same value. Therefore, instead of summing over the assignments to $V_\deriv$, we can simply sum over the assignments to $H_\deriv$ (and omit $\text{CondEquals}$ factors):
\begin{align*}
    Z_\deriv &= \sum_{\asst \in \Asst_{H_\deriv}} \prod_{C_\pi\in\mathcal{C}_\deriv} \prod_{e\in\pi} \text{CondFactor}_e(\rv{B}_\pi, \asst(\att(e))) \\
    &= \sum_{\asst \in \Asst_{H_\deriv}} \prod_{C_\pi\in\mathcal{C}_\deriv} \prod_{e\in\pi} F(e)(\asst(\att(e))) \\
    &= \sum_{\asst \in \Asst_{H_\deriv}} \prod_{e \in H_\deriv} F(e)(\asst(\att(e))).
\end{align*}
So, finally, the sum--product of $H$ can be rewritten as:
\begin{align*}
    Z_H &= \sum_{\deriv \in \mathcal{D}(G)} \, \sum_{\asst \in \Asst_{H_\deriv}} \,
            \prod_{e \in H_\deriv} F(e)(\asst(att(e))) \\
        &= \sum_{\deriv \in \mathcal{D}(G)} \, \sum_{\asst \in \Asst_{H_\deriv}} w_G(\deriv, \asst) \\
        &= Z_G.
\end{align*}
\section{Proof of Theorem~\ref{thm:undecidable}}
\label{sec:tm}

\newcommand{\blank}{\textrm{\textvisiblespace}}

Let $\Gamma$ be a finite alphabet containing a blank symbol $(\blank)$, and let $k=|\Gamma|$. Number the symbols in $\Gamma$ as $\gamma_0 = \blank, \gamma_1, \gamma_2, \ldots, \gamma_{k-1}$. Define an encoding for strings over $\Gamma$:
\begin{align*}
\langle \epsilon \rangle &= 0 \\
\langle \gamma_i w \rangle &= i + k\cdot\langle w \rangle.
\end{align*}
Note that strings that differ only in the number of trailing blanks have the same encoding.

\renewcommand{\div}{\ensuremath{\mathbin{/\!/}}}
\renewcommand{\mod}{\ensuremath{\mathbin{\%}}}

We write $x \div k$ for $\lfloor x/k \rfloor$ and $x \mod k = x - x \div k \cdot k$.

Let $M$ be a Turing machine with doubly-infinite tape, input alphabet $\Sigma$, tape alphabet $\Gamma$, start state $q_0$, transition function $\delta$, accept state $q_{\text{accept}}$, and reject state $q_{\text{reject}}$. For any input string $w \in \Sigma^\ast$, construct the following rules, where the $\rv{q}$ nodes track the Turing machine's state, the $\rv{u}$ nodes track the reverse of the tape to the left of the head, and the $\rv{v}$ nodes track the tape from the head rightward:

\begin{align*}
\begin{tikzpicture}
\node[fac] {$\nt{S}$};
\end{tikzpicture}
&\longrightarrow
\begin{tikzpicture}[x=2cm]
\node[var](u) at (-1,0) {$\rv{u}_1$};
\node[var](q) at (0,0) {$\rv{q}_2$};
\node[var](v) at (1,0) {$\rv{v}_3$};
\node[fac,label=above:{$\rv{u}_1 = 0$}] at (-1,1) {} edge (u);
\node[fac,label=above:{$\rv{q}_2 = q_0$}] at (0,1) {} edge (q);
\node[fac,label=above:{$\rv{v}_3 = \langle w\rangle$}] at (1,1) {} edge (v);
\node[fac] at (0,-1) {$\nt{T}$} edge (u) edge (q) edge (v);
\end{tikzpicture}
\\
\\[1ex]
\begin{tikzpicture}
\node[ext](u) at (-1,1) {$\rv{u}_1$};
\node[ext](q) at (0,1) {$\rv{q}_2$};
\node[ext](v) at (1,1) {$\rv{v}_3$};
\node[fac] at (0, 0) {$\nt{T}$} edge (u) edge (q) edge (v);
\end{tikzpicture}
&\longrightarrow
\begin{tikzpicture}[x=2cm]
\node[ext](u) at (-1,1) {$\rv{u}_1$};
\node[ext](q) at (0,1) {$\rv{q}_2$};
\node[ext](v) at (1,1) {$\rv{v}_3$};
\node[fac,label=below:{$\rv{q}_2 \in \{q_{\text{accept}}, q_{\text{reject}}\}$}] at (0,0) {} edge (q);
\end{tikzpicture}
\\
\intertext{For each transition $\delta(q, a) = (r, b, \text{L})$:}
\begin{tikzpicture}
\node[ext](u) at (-1,1) {$\rv{u}_1$};
\node[ext](q) at (0,1) {$\rv{q}_2$};
\node[ext](v) at (1,1) {$\rv{v}_3$};
\node[fac] at (0, 0) {$\nt{T}$} edge (u) edge (q) edge (v);
\end{tikzpicture}
&\longrightarrow
\begin{tikzpicture}[x=2.5cm,y=1cm]
\node[ext](u1) at (-1,1) {$\rv{u}_1$};
\node[ext](q1) at (0,1) {$\rv{q}_2$};
\node[ext](v1) at (1,1) {$\rv{v}_3$};
\node[fac,label=right:{$\rv{q}_2 = q$}] at (0.25,1) {} edge (q1);
\node[var](u2) at (-1,-1) {$\rv{u}_4$};
\node[var](q2) at (0,-1) {$\rv{q}_5$};
\node[fac,label=right:{$\rv{q}_5 = r$}] at (0.25,-1) {} edge (q2);
\node[var](v2) at (1,-1) {$\rv{v}_6$};
\node[fac,label=right:{$\rv{u}_5 = \rv{u}_1 \div k$}] at (-1,0) {} edge (u1) edge (u2);
\node[fac,label=right:{$\rv{v}_3 \mod k = a$}] at (1.25,1) {} edge (v1);
\node[fac,label=right:{$\rv{v}_6 = \rv{u}_1 \mod k + b \cdot k + \rv{v}_3 \div k \cdot k^2$}] at (1,0) {} edge (u1) edge (v1) edge (v2);
\node[fac] at (0,-2) {$\nt{T}$} edge (u2) edge (q2) edge (v2);
\end{tikzpicture}
\\
\intertext{For each transition $\delta(q, a) = (r, b, \text{R})$:}
\begin{tikzpicture}
\node[ext](u) at (-1,1) {$\rv{u}_1$};
\node[ext](q) at (0,1) {$\rv{q}_2$};
\node[ext](v) at (1,1) {$\rv{v}_3$};
\node[fac] at (0, 0) {$\nt{T}$} edge (u) edge (q) edge (v);
\end{tikzpicture}
&\longrightarrow
\begin{tikzpicture}[x=2.5cm,y=1cm]
\node[ext](u1) at (-1,1) {$\rv{u}_1$};
\node[ext](q1) at (0,1) {$\rv{q}_2$};
\node[fac,label=right:{$\rv{q}_2 = q$}] at (0.25,1) {} edge (q1);
\node[ext](v1) at (1,1) {$\rv{v}_3$};
\node[var](u2) at (-1,-1) {$\rv{u}_4$};
\node[var](q2) at (0,-1) {$\rv{q}_5$};
\node[fac,label=right:{$\rv{q}_5 = r$}] at (0.25,-1) {} edge (q2);
\node[var](v2) at (1,-1) {$\rv{v}_6$};
\node[fac,label=right:{$\rv{v}_3 \mod k = a$}] at (1.25,1) {} edge (v1);
\node[fac,label=right:{$\rv{u}_4 = b + \rv{u}_1 \cdot k$}] at (-1,0) {} edge (u1) edge (u2);
\node[fac,label=right:{$\rv{v}_6 = \rv{v}_3 \div k$}] at (1,0) {} edge (v1) edge (v2);
\node[fac] at (0,-2) {$\nt{T}$} edge (u2) edge (q2) edge (v2);
\end{tikzpicture}
\end{align*}

The sum-product of this FGG is 1 if $M$ halts on $w$, 0 otherwise. Therefore, computing the sum-product of an FGG is undecidable.

The operations $+, \cdot, \div, \mod$ and $=$ can be further reduced to just the successor relation and equality with zero, as shown below.

{\tikzset{x=1.2cm,y=0.6cm}

\begin{align*}
\begin{tikzpicture}
\node[ext](x) at (0,0) {$\rv{x}_1$};
\node[ext](y) at (2,0) {$\rv{x}_2$};
\node[fac] at (1,0) {$\nt{=}$} edge (x) edge (y);
\end{tikzpicture}
&
\longrightarrow
\begin{tikzpicture}
\node[ext](x) at (0,0) {$\rv{x}_1$};
\node[ext](y) at (4,0) {$\rv{x}_2$};
\node[var](z) at (2,0) {$\rv{x}_3$};
\node[fac,label=above:{$\rv{x}_3=\rv{x}_1+1$}] at (1, 0) {} edge (x) edge (z);
\node[fac,label=above:{$\rv{x}_3=\rv{x}_2+1$}] at (3, 0) {} edge (y) edge (z);
\end{tikzpicture}
\\
\\[1ex]
\begin{tikzpicture}
\node[ext](x) at (0,0) {$\rv{x}_1$};
\node[ext](y) at (2,0) {$\rv{x}_2$};
\node[fac] at (1,0) {$\nt{>}$} edge (x) edge (y);
\end{tikzpicture}
&
\longrightarrow
\begin{tikzpicture}
\node[ext](x) at (0,0) {$\rv{x}_1$};
\node[ext](y) at (4,1) {$\rv{x}_2$};
\node[var](x1) at (2,0) {$\rv{x}_3$};
\node[var](z) at (4,-1) {$\rv{x}_4$};
\node[fac,label=above:{$\rv{x}_1=\rv{x}_3+1$}] at (1, 0) {} edge (x) edge (x1);
\node[fac] at (3,0) {$+$} edge (x1) edge (z) edge (y);
\end{tikzpicture}
\\
\\[1ex]
\begin{tikzpicture}
\node[ext](z) at (1,0) {$\rv{x}_3$};
\node[ext](x) at (-1,1) {$\rv{x}_1$};
\node[ext](y) at (-1,-1) {$\rv{x}_2$};
\node[fac] at (0,0) {$\nt{+}$} edge (x) edge (y) edge (z);
\end{tikzpicture}
&
\longrightarrow
\begin{tikzpicture}
\node[ext](z) at (1,0) {$\rv{x}_3$};
\node[ext](x) at (-1,1) {$\rv{x}_1$};
\node[ext](y) at (-1,-1) {$\rv{x}_2$};
\node[fac,label=left:{$\rv{x}_2=0$}] at (-2, -1) {} edge (y);
\node[fac] at (0,0.5) {$=$} edge (x) edge (z);
\end{tikzpicture}
\\
\\[1ex]
\begin{tikzpicture}
\node[ext](z) at (1,0) {$\rv{x}_3$};
\node[ext](x) at (-1,1) {$\rv{x}_1$};
\node[ext](y) at (-1,-1) {$\rv{x}_2$};
\node[fac] at (0,0) {$\nt{+}$} edge (x) edge (y) edge (z);
\end{tikzpicture}
&
\longrightarrow
\begin{tikzpicture}
\node[ext](z) at (6,0) {$\rv{x}_3$};
\node[var](z1) at (4,0) {$\rv{x}_{5}$};
\node[ext](x) at (0,1) {$\rv{x}_1$};
\node[ext](y) at (0,-1) {$\rv{x}_2$};
\node[var](y1) at (2,-1) {$\rv{x}_{4}$};
\node[fac] at (3,0) {$\nt{+}$} edge (x) edge (y1) edge (z1);
\node[fac,label=above:{$\rv{x}_3 = \rv{x}_{5}+1$}] at (5,0) {} edge (z) edge (z1);
\node[fac,label=below:{$\rv{x}_2 = \rv{x}_{4}+1$}] at (1,-1) {} edge (y) edge (y1);
\end{tikzpicture}
\\
\\[1ex]
\begin{tikzpicture}
\node[ext](z) at (1,0) {$\rv{x}_3$};
\node[ext](x) at (-1,1) {$\rv{x}_1$};
\node[ext](y) at (-1,-1) {$\rv{x}_2$};
\node[fac] at (0,0) {$\cdot$} edge (x) edge (y) edge (z);
\end{tikzpicture}
&
\longrightarrow
\begin{tikzpicture}
\node[ext](z) at (1,0) {$\rv{x}_3$};
\node[ext](x) at (-1,1) {$\rv{x}_1$};
\node[ext](y) at (-1,-1) {$\rv{x}_2$};
\node[fac,label=below:{$\rv{x}_2=0$}] at (-2, -1) {} edge (y);
\node[fac,label=above:{$\rv{x}_3=0$}] at (2,0) {} edge (z);
\end{tikzpicture}
\\
\\[1ex]
\begin{tikzpicture}
\node[ext](z) at (1,0) {$\rv{x}_3$};
\node[ext](x) at (-1,1) {$\rv{x}_1$};
\node[ext](y) at (-1,-1) {$\rv{x}_2$};
\node[fac] at (0,0) {$\cdot$} edge (x) edge (y) edge (z);
\end{tikzpicture}
&
\longrightarrow
\begin{tikzpicture}
\node[ext](z) at (6,0) {$\rv{x}_3$};
\node[var](z1) at (4,-0.5) {$\rv{x}_{5}$};
\node[ext](x) at (0,1) {$\rv{x}_1$};
\node[ext](y) at (0,-1) {$\rv{x}_2$};
\node[var](y1) at (2,-1) {$\rv{x}_{4}$};
\node[fac] at (3,-0.5) {$\nt{\cdot}$} edge (x) edge (y1) edge (z1);
\node[fac] at (5,0) {$\nt{+}$} edge (x) edge (z) edge (z1);
\node[fac,label=below:{$\rv{x}_2 = \rv{x}_{4}+1$}] at (1,-1) {} edge (y) edge (y1);
\end{tikzpicture}
\\
\\[1ex]
\intertext{Integer division and remainder can both be computed using the rule:}
\begin{tikzpicture}
\node[ext](x) at (-1,1) {$\rv{x}_7$};
\node[ext](y) at (-1,-1) {$\rv{x}_2$};
\node[ext](q) at (1,1) {$\rv{x}_1$};
\node[ext](r) at (1,-1) {$\rv{x}_3$};
\node[fac] at (0,0) {$D$} edge (x) edge (y) edge (q) edge (r);
\end{tikzpicture}
&\rightarrow
\begin{tikzpicture}
\node[ext](x) at (2,1) {$\rv{x}_1$};
\node[ext](m) at (2,-1) {$\rv{x}_2$};
\node[var](xm) at (4,1) {$\rv{x}_5$};
\node[fac] at (3,1) {$\cdot$} edge (x) edge (m) edge (xm);
\node[ext](r) at (4,-1) {$\rv{x}_3$};
\node[ext](y) at (6,0) {$\rv{x}_7$};
\node[fac] at (3,-1) {$>$} edge (r) edge (m);
\node[fac] at (5,0) {$+$} edge (r) edge (xm) edge (y);
\end{tikzpicture}
\end{align*}
where $\rv{x}_7$ is the dividend, $\rv{x}_2$ is the divisor, $\rv{x}_1$ is the quotient, and $\rv{x}_3$ is the remainder.
}

\end{document}